\theoremstyle{definition}
\theoremstyle{remark}
\numberwithin{thm}{section}
\DeclareMathAlphabet{\mathsfsl}{OT1}{cmss}{m}{sl}
\renewcommand{\phi}{\varphi}
\newcommand{\latop}[2]{\genfrac{}{}{0pt}{}{#1}{#2}}
\newcommand{\grad}{\nabla}
\newcommand{\argmin}{\operatorname*{arg\,min}}
\newcommand{\bx}{\boldsymbol{x}}
\newcommand{\bX}{\boldsymbol{X}}
\def\bd{\boldsymbol{d}}
\def\bx{\boldsymbol{x}}
\def\bp{\boldsymbol{p}}
\def\b0{\mathbf{0}}
\def\bR{\boldsymbol{R}}
\def\bv{\boldsymbol{v}}
\def\bY{\boldsymbol{Y}}
\DeclareMathOperator{\Haar}{Haar}
\theoremstyle{plain}
\newtheorem{theorem}{Theorem}[section]
\newtheorem{proposition}[theorem]{Proposition}
\newtheorem{corollary}[theorem]{Corollary}
\theoremstyle{definition}
\theoremstyle{remark}
\icmltitlerunning{Robust Group Synchronization via Quadratic Programming}
\begin{document}

\twocolumn[
\icmltitle{Robust Group Synchronization via Quadratic Programming}



\icmlsetsymbol{equal}{*}

\begin{icmlauthorlist}
\icmlauthor{Yunpeng Shi}{equal,princeton}
\icmlauthor{Cole Wyeth}{equal,umn}
\icmlauthor{Gilad Lerman}{umn}
\end{icmlauthorlist}

\icmlaffiliation{princeton}{Program in Applied and Computational Mathematics, Princeton University}
\icmlaffiliation{umn}{School of Mathematics, University of Minnesota}
\icmlcorrespondingauthor{Yunpeng Shi}{yunpengs@princeton.edu}
\icmlcorrespondingauthor{Cole Wyeth}{wyeth008@umn.edu}
\icmlcorrespondingauthor{Gilad Lerman}{lerman@umn.edu}

\icmlkeywords{Machine Learning, ICML}

\vskip 0.3in
]



\printAffiliationsAndNotice{\icmlEqualContribution} 

\begin{abstract}
We propose a novel quadratic programming formulation for estimating the corruption levels 
in group synchronization, and use these estimates to solve this problem. 
Our objective function exploits the cycle consistency of the group and we thus refer to our method as detection and estimation of structural consistency (DESC). This general framework can be extended to other algebraic and geometric structures. 
Our formulation has the following advantages: 
it can tolerate corruption as high as the information-theoretic bound, it does not require a good initialization for the estimates of group elements, 
it has a simple interpretation, and under some mild conditions the global minimum of our objective function exactly recovers the corruption levels.
We demonstrate the competitive accuracy of our approach  
on both synthetic and real data experiments of rotation averaging.
\end{abstract}

\section{Introduction}
Group synchronization (GS) is a critical mathematical problem that has broad applications in statistics and computer science. It assumes a mathematical group $\mathcal G$ and a graph $G([n], E)$ where $[n]=\{1,2,\dots, n\}$ and $E$ is the set of edges. Each node $i$ of the graph is assigned an unknown group element $g_i^*$, where the star superscript designates ground-truth information. At each edge $ij\in E$, one observes noisy and corrupted measurements $g_{ij}$ of the ground-truth group ratio $g_{ij}^* = g_i^*g_j^{*-1}$. The GS problem asks to recover the unknown group elements,  $\{g_i^*\}_{i \in [n]}$, from the observed group ratios, $\{g_{ij}\}_{ij \in E}$.  The most well-known group synchronization problem is rotation averaging in 3D computer vision, where $\mathcal G$ is $SO(3)$. It asks to recover the absolute rotations of objects from the possibly corrupted and noisy relative rotations between pairs of objects. The rotation averaging problem plays a central role in many 3D computer vision tasks such as structure from motion (SfM) and simultaneous localization and mapping (SLAM). Other examples of group synchronization include phase synchronization ($\mathcal{G}=SO(2)$) with applications to cryo-electron microscopy imaging, permutation synchronization  ($\mathcal{G}=S_n$) with applications to multi-object matching, and $\mathbb Z_2$-synchronization ($\mathcal{G}=\mathbb Z_2$) with applications to correlation clustering and community detection. 

In many real scenarios, the observed group ratios are highly corrupted. In order to handle the high corruption, it is critical to estimate the corruption levels. In order to quantify them we assume a predefined bi-invariant metric on $\mathcal G$, which we denote by $d$. The bi-invariance property of $d$ means that for any $g_1,g_2,g_3\in\mathcal G$, $d(g_1, g_2)=d(g_3g_1, g_3g_2)=d(g_1g_3, g_2g_3)$. Using $d$, the corruption level of edge $ij\in E$ is 
\begin{align}
    s_{ij}^* = d(g_{ij}, g_{ij}^*).
\end{align}
The primary goal of this work is to develop a principled and interpretable framework to estimate $\{s_{ij}^*\}_{ij \in E}$ without requiring a good initialization, and to robustly estimate the group elements using the estimated $\{s_{ij}^*\}_{ij \in E}$.  

For this purpose we exploit the cycle-consistency constraint. It uses the group identity $e$ as follows:
\begin{align}\label{eq:cycle-consistency}
    g_{ij}^*g_{jk}^*g_{ki}^*=e \ \text{ for any } ij, jk, ki \in E,
\end{align}
In principle, one can utilize the cycle-consistency constraints for longer cycles, whereas we focus on 3-cycles for simplicity and for computational efficiency. 

We propose a general solution for GS, which we refer to as Detection and Estimation of Structural Consistency (DESC). The terminology ``structural consistency" refers to the induced cycle consistency constraint. 
We believe that this framework can also be generalized to other algebraic or geometric structure constraints in other application areas, such as low-rankness, low dimensionality, coplanarity of points in a Euclidean space, transitivity of ordering in the ranking problem, etc. Nevertheless, in order to keep this presentation focused and clear, we restrict it to GS. Similarly, we discuss the most refined procedures of DESC for the group $SO(3)$ only.

\subsection{Previous Works}
Group synchronization was commonly applied to
the discrete groups $\mathbb Z_2$ and $S_n$ \citep{Z2Afonso,Z2,ling2020,chen_partial,deepti} and the Lie group $SO(d)$ \cite{singer2011angular, wang2013exact, rotation_duality}. Its most common formulation uses least squares (LS) minimization:
\begin{align}
\label{eq:LS}
    g_i = \argmin_{g_i\in \mathcal G} \sum_{ij\in E}d^2(g_ig_j^{-1}, g_{ij}).
\end{align}
Since the domain of this minimization is a nonconvex group, it is often relaxed to an eigendecomposition problem \cite{Z2, singer2011angular, deepti, ling2020}, or a semidefinite programming (SDP) problem \cite{Z2Afonso, singer2011angular, chen_partial}, or when $\mathcal G$ is a Lie group it is solved locally and iteratively by tangent space approximations of the manifold  \cite{Govindu04_Lie}. For discrete groups, such as $\mathbb Z_2$ and $S_n$, this formulation is robust to corruptions. However, when $\mathcal G$ is a Lie group, such as $\mathcal G = SO(d)$, then it is sensitive to outliers. In this case, robustness to outliers can be achieved by either introducing a robust objective function, or applying an outlier detection algorithm to preprocess the corrupted data. 

A common robust formulation is $\ell_p$-minimization with $0<p \leq 1$ \cite{wang2013exact, IRLS_L12, HartleyAT11_rotation}, where $d^2$ in \eqref{eq:LS} is replaced with $d^p$.
It is often solved by the iteratively reweighted least squares (IRLS). 
The main limitation of IRLS is that it requires good initialization of group elements and  may easily get stuck at local minima in the presence of high noise and corruption.  A recent work by \citet{maunu_lerman_DDS} uses an energy based on Tukey depth to achieve provable robust synchronization to arbitrary outliers, but it also requires local initialization.

Outlier detection methods for GS utilize cycle-consistency constraints to distinguish between clean and corrupted edges. In particular, \citet{Zach2010} suggested two 
methods, based on belief propagation and linear programming.  \citet{agarwal2020rank} proposed a similar linear programming approach for the different ranking problem. \citet{shen2016} classified an edge as clean according to its appearance in a consistent cycle (note that such an approach cannot handle self-consistent corruption).
We remark that both IRLS and outlier detection methods eventually rely on accurate assignment of edge weights. For outlier detection, the edge weights are binary, where zero weight corresponds to removing an edge. However, the binary weights do not exactly reflect the corruption levels and may thus result in suboptimal estimates for group elements. 
IRLS updates the edge weights from the estimated corruption levels. However, the corruption levels are heuristically estimated in an iterative procedure, which is sensitive to the initialized estimates of the group elements. 

The recent cycle-edge message passing (CEMP) \cite{cemp} overcomes the aforementioned drawbacks of both IRLS and 
outlier detection methods. It estimates the corruption levels without requiring a good initialization or solving weighted least squares, even when the corruption is high. Given a set of 3-cycles, for each 3-cycle $ijk$, CEMP first computes the cycle inconsistency: $d_{ij,k} = d(g_{ij}g_{jk}g_{ji}, e)$. It then estimates the edge corruption level from those cycle inconsistencies via message passing between cycles and edges. However, the message passing procedure is hard to interpret 
as it does not explicitly optimize an objective function. Moreover, both its performance and theory rely on a set of reweighting parameters. A recent message passing least squares \cite{MPLS} framework combines CEMP-like iterations with IRLS and achieves superior performance on a variety of datasets. However, it remains heuristic and lacks theoretical guarantees.

Few previous works for GS \cite{birdal2018bayesian, K_best,birdal2020synchronizing} further estimate the distribution of the group elements. 
They aim to address scene ambiguities and for this purpose assume special probabilistic models.
Our proposed work estimates the distribution of corruption levels, assuming that a deterministic condition holds.
This estimated distribution is merely used to improve the estimation of the corruption levels, though it might be used in the future for statistical inference and uncertainty quantification.

A common theoretical setting in GS assumes
the uniform corruption model (UCM). In this model, the graph $G$ is generated by the Erd\H{o}s-R\'{e}nyi model $G(n,p)$, where $p$ is the probability of connecting two nodes. An edge is then independently corrupted with probability $q$. If $ij\in E$ is corrupted, then $g_{ij}$ is i.i.d.~sampled from a Haar measure on $\mathcal G$, otherwise, $g_{ij}=g_{ij}^*$. Under UCM, the information theoretic sample complexity for the exact recovery of group elements is $n/\log n=O(p^{-1}(1-q)^{-2})$ \cite{info_theoretic_sync}. For $\mathbb Z_2$ and $S_n$-synchronization, spectral and SDP methods match this bound \cite{Z2, Z2Afonso, chen_partial, ling2020}. However, for Lie group synchronization, it is a challenging open problem to prove that an algorithm can match the information theoretic sample complexity. The best sample complexity bound for $SO(2)$ and $SO(3)$ was established for CEMP: $n/\log n=O(p^{-2}(1-q)^{-8})$ \cite{cemp}, but there is a clear gap with the desired bound.

\subsection{This Work}
In view of the limitations of the previous methods, we summarize the contributions of our proposed DESC. 
\begin{itemize}
    \item DESC is a novel quadratic programming framework for estimating the corruption levels of edges in GS. Its  minimization formulation provides a simple interpretation. We prove that under mild conditions, the global minimum recovers the corruption levels
    \item We show that under UCM, the sample complexity of our QP formulation is $n/\log n = O(p^{-2}(1-q)^{-2})$. It matches the dependence of the information-theoretic bound on $q$, unlike previous methods
    \item Our QP formulation is parameter free and does not require good initialization even in highly corrupted scenarios. We demonstrate that a naive projected gradient descent is able to obtain satisfying corruption estimates
    \item For rotation averaging, we develop an algorithm for estimating the absolute rotations using the corruption levels that are estimated from our QP framework (we refer to both the latter framework and the former algorithm as DESC). Both synthetic and real data experiments demonstrate the state-of-the-art accuracy of this algorithm 
\end{itemize}

The rest of the paper is organized as follows: \S\ref{sec:desc} presents the framework of DESC for GS, its theoretical guarantees and the DESC algorithm for rotation averaging; \S\ref{sec:experiment} tests DESC on synthetic and real datasets; and \S\ref{sec:conclude} concludes this work. The supplemental code is in \url{https://github.com/ColeWyeth/DESC}

\section{The DESC Framework}\label{sec:desc}
We explain DESC as follows: \S\ref{sec:pre} introduces notation and preliminary observations; \S\ref{sec:form} and \S\ref{sec:qp} formulate and motivate the DESC framework for corruption estimation; \S\ref{sec:theory} demonstrates the theory of this formulation under UCM; \S\ref{sec:opt} describes the optimization method that we adopted to solve this formulation; \S\ref{sec:complexity}
clarifies its computational complexity;  \S\ref{sec:estimate_group_elements} explains how to generally recover group elements using the output of our framework; and \S\ref{sec:refine} presents our refined DESC algorithm for rotation averaging, and, in particular, its initialization, which we refer to as DESC-init.
\subsection{Preliminaries}\label{sec:pre}
The (noiseless) adversarial corruption model assumes that the set of edges $E$ is partitioned to $E_g$ of good (clean) edges,  where $g_{ij}=g_{ij}^*$, and $E_b$ of bad (corrupted) edges,
where $g_{ij}\neq g_{ij}^*$. For each $ij\in E$, let $C_{ij}:=\{k: ik, jk\in E\}$ and $G_{ij}:=\{k: ik, jk\in E_g\}$. While $C_{ij}$ and $G_{ij}$ are sets of the nodes, we also view them as sets of cycles $ijk$ containing edge $ij$, where $k \in C_{ij}$ or $G_{ij}$, respectively. 
When addressing the adversarial corruption model we further assume that for each $ij\in E$, $G_{ij}$ is nonempty. Recall that $d$ is a bi-invariant metric on $\mathcal G$ and assume WLOG that it is scaled so that $d(g_1, g_2)\in [0,1]$ for any $g_1, g_2\in \mathcal G$. Therefore, $s_{ij}^*\in [0,1]$ for all $ij\in E$.  We take advantage of the cycle inconsistency $d_{ij,k} = d(g_{ij}g_{jk}g_{ji}, e)$ and its following property \cite{cemp}:
\begin{proposition}
For any $ij\in E$ and $k\in C_{ij}$, 
\begin{align}\label{eq:dijk_sij}
    |d_{ij,k} - s_{ij}^*|\leq s_{ik}^*+s_{jk}^*,
\end{align}
and consequently,
\begin{align}\label{eq:corollary}
    d_{ij,k} = s_{ij}^* \quad \text{ if }\quad ik, jk\in E_g.
\end{align}
\end{proposition}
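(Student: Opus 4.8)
The plan is to interpolate between the observed $3$-cycle product and the ground-truth one, replacing a single observed group ratio by its ground-truth value at a time and using bi-invariance to reduce each substitution to a single-edge distance. Write $P = g_{ij}g_{jk}g_{ki}$ for the observed cycle product, so that $d_{ij,k} = d(P,e)$, and recall from \eqref{eq:cycle-consistency} that $e = g_{ij}^*g_{jk}^*g_{ki}^*$. The first move is to swap only the edge $ij$: set $Q = g_{ij}^*g_{jk}g_{ki}$. Since $d$ is right-invariant, cancelling the common suffix $g_{jk}g_{ki}$ gives exactly $d(P,Q) = d(g_{ij}, g_{ij}^*) = s_{ij}^*$.

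Next I would bound $d(Q,e)$ by swapping the remaining two edges. Introducing $R = g_{ij}^*g_{jk}^*g_{ki}$, bi-invariance (cancel the prefix $g_{ij}^*$ and the suffix $g_{ki}$) yields $d(Q,R) = d(g_{jk}, g_{jk}^*) = s_{jk}^*$, and cancelling the prefix $g_{ij}^*g_{jk}^*$ gives $d(R,e) = d(g_{ki}, g_{ki}^*) = s_{ki}^*$. Here I would use that a bi-invariant metric satisfies $d(a^{-1},b^{-1}) = d(a,b)$ (left-multiply both arguments by $a$, then right-multiply by $b$), together with the convention $g_{ki} = g_{ik}^{-1}$ and $g_{ki}^* = g_{ik}^{*-1}$, to identify $s_{ki}^* = s_{ik}^*$. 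The triangle inequality then gives $d(Q,e) \leq d(Q,R) + d(R,e) = s_{jk}^* + s_{ik}^*$.

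The final step is the reverse triangle inequality: since $d(P,\cdot)$ is $1$-Lipschitz, $|d(P,e) - d(P,Q)| \leq d(Q,e)$. Combining this with $d(P,Q) = s_{ij}^*$ and the bound on $d(Q,e)$ yields $|d_{ij,k} - s_{ij}^*| \leq s_{ik}^* + s_{jk}^*$, which is \eqref{eq:dijk_sij}. The corollary \eqref{eq:corollary} is then immediate: if $ik, jk \in E_g$ then $g_{ik}=g_{ik}^*$ and $g_{jk}=g_{jk}^*$, so $s_{ik}^* = s_{jk}^* = 0$ and the inequality forces $d_{ij,k} = s_{ij}^*$. There is no serious obstacle here; the only points requiring care are the bookkeeping of which side of each product to cancel (left- versus right-invariance), the inversion symmetry $s_{ki}^* = s_{ik}^*$, and---crucially---using the reverse triangle inequality rather than a naive three-term bound, since the latter would only give the weaker estimate $d_{ij,k} \leq s_{ij}^* + s_{ik}^* + s_{jk}^*$ and would miss the two-sided inequality centered at $s_{ij}^*$.
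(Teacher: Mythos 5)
Your proof is correct. The paper itself does not prove this proposition---it imports it from \cite{cemp}---and your argument (interpolating one edge at a time through $g_{ij}^*g_{jk}g_{ki}$ and $g_{ij}^*g_{jk}^*g_{ki}$, cancelling common factors by bi-invariance, and finishing with the reverse triangle inequality to get the two-sided bound centered at $s_{ij}^*$) is essentially the same chain used in that cited proof, including the correct reading of the cycle product as $g_{ij}g_{jk}g_{ki}$ and the inversion symmetry $s_{ki}^*=s_{ik}^*$.
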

Note that \eqref{eq:corollary} states that the cycle inconsistency of cycle $ijk$ is an exact estimator of the corruption level of edge $ij$ whenever $k\in G_{ij}$. 
Since we assumed that $G_{ij}$ is nonempty for any $ij \in E$, $s_{ij}^*$ must be supported on the set $\{d_{ij,k}\}_{k\in C_{ij}}$. The distribution of 
$s_{ij}^*$ can thus be written as the probability mass for $k \in |C_{ij}|$: $\bp_{ij}^*(k) = \mathbf 1_{\{k\in G_{ij}\}}/|G_{ij}|$, where $\mathbf{1}$ is the indicator function. Indeed, by \eqref{eq:corollary}, $\bp_{ij}^*$ only has positive mass on the $k$'s such that $d_{ij,k}=s_{ij}^*$, so the distribution concentrates on the true value of $s_{ij}^*$. 
Let $\Delta(m)$ denote the simplex of length $m$, that is, $\Delta(m)=\{\bx\in \mathbb R^m: \bx\geq \b0, \,\, \|\bx\|_1 = 1\}$, 
where $ \bx\geq \b0$ means that all coordinates of $\bx$ are nonnegative. Let $\Delta'(|C_{ij}|)$ denote the subset of $\Delta(|C_{ij}|)$ with zero coordinates whenever $k\notin G_{ij}$.  Using this notation, $\bp_{ij}^*\in \Delta'(|C_{ij}|) \subset \Delta(|C_{ij}|)$.

\subsection{General Formulation and Motivation}\label{sec:form} \label{DESC formulation}
Let $\bd_{ij}, \bv_{ij}^* \in \mathbb R^{|C_{ij}|}$ denote the vectors such that $\bd_{ij}(k) = d_{ij,k}$ and $\bv_{ij}^*(k) = s_{ik}^*+s_{jk}^*$ for $k \in C_{ij}$, respectively. We notice the following interesting relationship of $\bp_{ij}^*$ and $s_{ij}^*$: 
\begin{proposition}
\label{prop:ps}
For any $ij\in E$
\begin{align}\label{eq:ps}
    \bp_{ij}^{*\top} \bd_{ij} = s_{ij}^*\quad \text {and} \quad
    \bp_{ij}^{*\top} \bv_{ij}^* =0.
\end{align}
\end{proposition}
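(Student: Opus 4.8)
The plan is to verify each of the two identities in \eqref{eq:ps} directly from the definition of $\bp_{ij}^*$ as the normalized indicator of the good-neighbor set $G_{ij}$, together with the exact-estimator property \eqref{eq:corollary} already established in the previous proposition. Since $\bp_{ij}^*(k) = \mathbf{1}_{\{k \in G_{ij}\}}/|G_{ij}|$, the inner product $\bp_{ij}^{*\top}\bd_{ij}$ is simply the average of $\bd_{ij}(k) = d_{ij,k}$ over $k \in G_{ij}$. For the first identity, I would observe that by \eqref{eq:corollary}, for every $k \in G_{ij}$ (equivalently $ik, jk \in E_g$) we have $d_{ij,k} = s_{ij}^*$ exactly. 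Hence every term being averaged equals the same constant $s_{ij}^*$, and the normalized average is $s_{ij}^*$ itself, giving $\bp_{ij}^{*\top}\bd_{ij} = s_{ij}^*$.

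For the second identity, I would use the same averaging interpretation applied to $\bv_{ij}^*(k) = s_{ik}^* + s_{jk}^*$. The point is that for $k \in G_{ij}$ we have both $ik \in E_g$ and $jk \in E_g$ by the definition of $G_{ij}$, so by the definition of the corruption level the clean edges satisfy $g_{ik} = g_{ik}^*$ and $g_{jk} = g_{jk}^*$, whence $s_{ik}^* = d(g_{ik}, g_{ik}^*) = 0$ and $s_{jk}^* = 0$. Therefore $\bv_{ij}^*(k) = 0$ for every $k$ in the support of $\bp_{ij}^*$, and the inner product $\bp_{ij}^{*\top}\bv_{ij}^* = \tfrac{1}{|G_{ij}|}\sum_{k \in G_{ij}}(s_{ik}^* + s_{jk}^*) = 0$ follows immediately.

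The only genuine ingredient is the vanishing of corruption levels on clean edges, which is essentially definitional: a clean edge has $g_{ij} = g_{ij}^*$, so $s_{ij}^* = d(g_{ij}^*, g_{ij}^*) = 0$ since $d$ is a metric. I would state this small observation explicitly, as it is the crux of the second identity, and then both claims reduce to averaging a constant. I do not anticipate any real obstacle here; the statement is a short unwinding of definitions, and the main thing to be careful about is to restrict all sums to the support $G_{ij}$ rather than the larger index set $C_{ij}$, since the identities rely precisely on the mass of $\bp_{ij}^*$ sitting only on good neighbors (which is guaranteed by the standing assumption that $G_{ij}$ is nonempty, so the normalization $1/|G_{ij}|$ is well defined).
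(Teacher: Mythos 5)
Your proof is correct and uses essentially the same argument as the paper: restrict the inner products to the support $G_{ij}$, apply \eqref{eq:corollary} to conclude $d_{ij,k}=s_{ij}^*$ there, and observe that the corruption levels $s_{ik}^*, s_{jk}^*$ vanish on clean edges. The only difference is that the paper states and proves the slightly more general identity \eqref{eq:ps_2}, valid for every $\bp_{ij}\in\Delta'(|C_{ij}|)$ rather than just the uniform $\bp_{ij}^*$, because that general form is reused in the proof of Theorem~\ref{thm:main}; your averaging argument extends verbatim to that case, since nothing in it uses the uniformity of the weights beyond their summing to one.
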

\begin{proof}
We prove the following more general result that implies \eqref{eq:ps} and we will be used later: 
\begin{equation}
\label{eq:ps_2}
\bp_{ij}^{\top} \bd_{ij} = s_{ij}^* \text{ and } 
    \bp_{ij}^{\top} \bv_{ij}^* =0, \text{ for } \bp_{ij} \in \Delta'(|C_{ij}|).
\end{equation}
The definition of $\Delta'(|C_{ij}|)$ and \eqref{eq:corollary} imply the first equality as follows: $\bp_{ij}^{\top} \bd_{ij} = \sum_{k\in G_{ij}} \bp_{ij}(k)d_{ij,k} = s_{ij}^*$. The second equality follows from $p_{ij}(k) \bv_{ij}^*(k) = p_{ij}(k) \mathbf 1_{\{k\in G_{ij}\}} (s_{ik}^*+s_{jk}^*) = p_{ij}(k) \mathbf 1_{\{s_{ik}^*+s_{jk}^*=0\}} (s_{ik}^*+s_{jk}^*)=0$. Since  $\bp_{ij} \in \Delta'(|C_{ij}|)$,
\eqref{eq:ps_2} implies \eqref{eq:ps}.
\end{proof}
We remark that $\eqref{eq:ps}$ only relies on the assumption that $|G_{ij}|>0$, and does not assume any probabilistic model. 

DESC aims to estimate both $\{\bp_{ij}^*\}_{ij \in E}$ and $\{s_{ij}^*\}_{ij \in E}$, while directly using 
Proposition \eqref{prop:ps}. 
The ``detection" task in DESC refers to finding $\bp_{ij}^*$, which is equivalent to detecting the set of ``good" cycles $ijk$ such that $d_{ij,k}=s_{ij}^*$. The ``estimation" task in DESC refers to estimating $s_{ij}^*$. 
For $ij \in E$, let $\bp_{ij}\in \mathbb R^{|C_{ij}|}$ and $s_{ij}\in \mathbb R$  denote the estimates by DESC of 
$\bp_{ij}^*$ and $s_{ij}^*$.
We also define $\bv_{ij}\in \mathbb R^{|C_{ij}|}$
by $\bv_{ij}(k) = s_{ik}+s_{jk}$ for $k \in C_{ij}$, so $\bv_{ij}$ estimates $\bv_{ij}^*$.

DESC aims to solve the following optimization problem.
\begin{align}\label{eq:desc}
    \min_{\{s_{ij}\}_{ij\in E}, \{\bp_{ij}\}_{ij\in E}} & \sum_{ij\in E} \bp_{ij}^\top \bv_{ij} \\\nonumber
    \text{subject to} \quad &s_{ij} = \bp_{ij}^{\top} \bd_{ij},\,\, ij\in E\\\nonumber
    & \bp_{ij}\in \Delta(|C_{ij}|),\,\, ij\in E.
\end{align}
Note that the constraints in \eqref{eq:desc} and the fact that $d$ was scaled to be in $[0,1]$ (so $d_{ij,k} \leq 1$) imply that $\bp_{ij}(k)$ and $s_{ij}$ in \eqref{eq:desc} are between 0 and 1.
This formulation is clearly motivated by Proposition \eqref{prop:ps}. Indeed, the first equation of \eqref{eq:ps} is the first constraint of DESC (the other constraint of DESC just specifies the domain of $\bp_{ij}$). In order to try to enforce the second equation of \eqref{eq:ps} we aim to minimize the cumulative sum of the positive terms $\{\bp_{ij}^{*\top} \bv_{ij}^*\}_{ij \in E}$. When the minimum value is 0, then the second equation of \eqref{eq:ps} is satisfied for all $ij \in E$. 

Another interpretation of \eqref{eq:desc} arises when bounding the cumulative estimation error of $\{s_{ij}^*\}_{ij \in E}$, using the constraints in \eqref{eq:desc} as well as \eqref{eq:dijk_sij}:
\begin{align}\label{eq:quad_star}
 &  \sum_{ij\in E}|s_{ij}-s_{ij}^*| 
 = \sum_{ij\in E}\left|\bp_{ij}^\top \bd_{ij}-s_{ij}^*\right| \\\nonumber
    =& \sum_{ij\in E}\left|\sum_{k\in C_{ij}} \bp_{ij}(k)\left(\bd_{ij}(k)-s_{ij}^*\right)\right|  \\\nonumber
    \leq & \sum_{ij\in E}\sum_{k\in C_{ij}}  \bp_{ij}(k)\left|\bd_{ij}(k)-s_{ij}^*\right|  \\\nonumber
    \leq & \sum_{ij\in E}\sum_{k\in C_{ij}} \bp_{ij}(k)(s_{ik}^*+s_{jk}^*) = \sum_{ij\in E} \bp_{ij}^\top \bv_{ij}^*.
\end{align}
Therefore, \eqref{eq:desc} minimizes an approximate upper bound for the cumulative error, where $\bv_{ij}$ replaces $\bv_{ij}^*$ in the right hand side (RHS) of \eqref{eq:quad_star}.

\subsection{DESC as a Quadratic Program}\label{sec:qp}

Plugging the first constraint of \eqref{eq:desc} into the objective function of \eqref{eq:desc} yields a quadratic objective function in $\{\bp_{ij}\}_{ij \in E}$:
\begin{multline}
\label{eq:quad}
  \sum_{ij\in E}\sum_{k\in C_{ij}} \bp_{ij}(k)(s_{ik}+s_{jk})\\
    =  
    \sum_{ij\in E} \sum_{k\in C_{ij}} \bp_{ij}(k)\left(\bp_{ik}^\top \bd_{ik} + \bp_{jk}^\top \bd_{jk}\right). 
\end{multline}
Therefore, in order to minimize \eqref{eq:desc} it is sufficient to find all minimizers of the RHS of \eqref{eq:quad} of the form $\{\widehat\bp_{ij}\}_{ij \in E}$. 
The first constraint of \eqref{eq:desc} is not needed in the latter minimization, but it yields the additional minimizers of the form $\{\widehat{s}_{ij}\}_{ij \in E}$ of \eqref{eq:desc} as follows: 
\begin{equation}
\label{eq:from_desc_to_qp}
\widehat{s}_{ij} = \widehat\bp_{ij}^\top \bd_{ij} \ \text{ for } ij \in E.    
\end{equation} 
Thus, the quadratic programming formulation of DESC is
\begin{equation}\label{eq:qp3}
  \min_{\latop{\{\bp_{ij}\}_{ij\in E}}{\subset  \Delta(|C_{ij}|)}}   
  \sum_{ij\in E} \sum_{k\in C_{ij}} \bp_{ij}(k)\left(\bp_{ik}^\top \bd_{ik} + \bp_{jk}^\top \bd_{jk}\right),
\end{equation}
where in view of \eqref{eq:from_desc_to_qp}, there is a one-to-one correspondence between the minimizers 
of \eqref{eq:qp3} and  \eqref{eq:desc}.

\subsection{Theory for the DESC Framework}\label{sec:theory}
We show that under some mild conditions, the global minimum of the DESC formulation exactly recovers $s_{ij}^*$.
\begin{theorem}
\label{thm:main}
If   $|G_{ij}|\geq 1$  $\forall ij\in E$  and $\forall k\in C_{ij}$ $d_{ij,k}>0$ whenever $ij$, $jk$ or $ki\notin E_g$,
then any global minimum 
of \eqref{eq:desc} 
exactly recovers the corruption levels $\{s_{ij}^*\}_{ij \in E}$
\end{theorem}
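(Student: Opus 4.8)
The plan is to show that the optimal value of \eqref{eq:desc} is exactly $0$ and that attaining this value forces $s_{ij}=s_{ij}^*$ at every minimizer. First I would argue the objective is globally bounded below by $0$ and that this bound is attained at $\bp_{ij}=\bp_{ij}^*$. At any feasible point, $s_{ij}=\bp_{ij}^\top \bd_{ij}\geq 0$ because $\bd_{ij}\geq \b0$ (its entries are distances) and $\bp_{ij}\in\Delta(|C_{ij}|)$; hence $\bv_{ij}(k)=s_{ik}+s_{jk}\geq 0$ and every summand $\bp_{ij}^\top\bv_{ij}$ is nonnegative. Using the hypothesis $|G_{ij}|\geq 1$, the point $\{\bp_{ij}^*\}$ is feasible, and by Proposition \ref{prop:ps} it gives $s_{ij}=s_{ij}^*$ with objective value $\sum_{ij\in E}\bp_{ij}^{*\top}\bv_{ij}^*=0$. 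Therefore the global minimum value is $0$.

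Next I would exploit this at an arbitrary global minimizer $(\{\widehat s_{ij}\},\{\widehat\bp_{ij}\})$. Since the total objective vanishes and it is a sum of nonnegative terms, each term vanishes: $\widehat\bp_{ij}^\top\widehat\bv_{ij}=0$ for all $ij\in E$. As this term is itself a sum of nonnegative products $\widehat\bp_{ij}(k)(\widehat s_{ik}+\widehat s_{jk})$, for every $k\in\supp{\widehat\bp_{ij}}$ we must have $\widehat s_{ik}+\widehat s_{jk}=0$, and thus $\widehat s_{ik}=\widehat s_{jk}=0$.

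The crucial step is to convert the numerical statement $\widehat s_{ik}=0$ into the structural statement $ik\in E_g$, and this is exactly where the positivity hypothesis enters. Since $\widehat s_{ik}=\sum_{l\in C_{ik}}\widehat\bp_{ik}(l)\,d_{ik,l}=0$ is a convex combination (the weights sum to $1$) of the nonnegative quantities $d_{ik,l}$, there is at least one index $l$ with $\widehat\bp_{ik}(l)>0$ and $d_{ik,l}=0$. The contrapositive of the hypothesis, namely that $d_{ik,l}=0$ implies $ik,kl,li\in E_g$, then yields $ik\in E_g$; arguing symmetrically from $\widehat s_{jk}=0$ gives $jk\in E_g$. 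Hence every $k\in\supp{\widehat\bp_{ij}}$ lies in $G_{ij}$, which is precisely the statement $\widehat\bp_{ij}\in\Delta'(|C_{ij}|)$. Finally I would invoke the general identity \eqref{eq:ps_2}, which asserts $\bp_{ij}^\top\bd_{ij}=s_{ij}^*$ for every $\bp_{ij}\in\Delta'(|C_{ij}|)$; applying it to $\widehat\bp_{ij}$ gives $\widehat s_{ij}=\widehat\bp_{ij}^\top\bd_{ij}=s_{ij}^*$ for all $ij\in E$, i.e.\ exact recovery.

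I expect the main obstacle to be the middle step. The recovery of the edge structure does not follow from $\widehat s_{ik}=0$ directly but through the existence of a single cycle with zero inconsistency in the support of $\widehat\bp_{ik}$, and one must check this argument is not circular: it relies only on the nonnegativity of the summands together with the positivity hypothesis, and never presupposes that the neighboring supports already lie in their good-cycle sets $G_{ik}$. Once this is confirmed, the support containment $\supp{\widehat\bp_{ij}}\subseteq G_{ij}$ propagates to every edge simultaneously and the conclusion is immediate from \eqref{eq:ps_2}.
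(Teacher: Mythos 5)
Your proposal is correct and follows essentially the same argument as the paper: both rest on the facts that $\{\bp_{ij}^*\}$ is feasible with objective value $0$ (via Proposition \ref{prop:ps} and $|G_{ij}|\geq 1$), that the positivity hypothesis forces the support of every optimal $\widehat\bp_{ij}$ into $G_{ij}$, and that \eqref{eq:ps_2} then yields $\widehat s_{ij}=s_{ij}^*$. The only difference is presentational: the paper argues by contradiction (a bad support index makes the objective strictly positive), whereas you run the contrapositive directly and make explicit the intermediate fact that the optimal value is $0$ and each summand vanishes.
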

\begin{proof}
Let $\{\widehat\bp_{ij}\}_{ij \in E}$ be a minimizer of \eqref{eq:qp3} and 
$\{\widehat{s}_{ij}\}_{ij \in E}$ be defined by \eqref{eq:from_desc_to_qp} (so $\{\widehat\bp_{ij}\}_{ij \in E}$, $\{\widehat{s}_{ij}\}_{ij \in E}$ is a minimizer of \eqref{eq:desc}). 
We will show that $\widehat{s}_{ij} = s_{ij}^*$ for all $ij \in E$ and will thus conclude the stated exact recovery.

If   $\widehat\bp_{ij} \in \Delta'(|C_{ij}|)$, then $s_{ij}=\bp_{ij}^{\top}\bd_{ij} = s_{ij}^*$, where the first equality is due to the first constraint in \eqref{eq:desc} and the second one is due to \eqref{eq:ps_2}. Consequently, $\widehat{s}_{ij} = s_{ij}^*$ and the corruption levels are exactly recovered.

To conclude the proof we will show that $\widehat\bp_{ij} \in \Delta'(|C_{ij}|)$. Assume on the contrary that $\widehat\bp_{ij}(k)>0$ for some $k\notin G_{ij}$. Since $k\notin G_{ij}$, WLOG we assume that $ik\in E_b$. By our assumption on cycle-inconsistencies, we obtain that $d_{ik,l}>0$ for all $l\in C_{ik}$. Thus, $s_{ik} = \bp_{ik}^\top \bd_{ik}>0$ (since all elements of $\bd_{ik}$ are positive and at least one element of $\bp_{ik}$ is positive). Consequently, $\bp_{ij}(k)s_{ik}>0$ and the value of \eqref{eq:quad} is strictly greater than 0. This contradicts  the assumption that $\{\bp_{ij}\}_{ij\in E}$ is a global minimum. 
\end{proof}
We provide two immediate corollaries of Theorem~\ref{thm:main}.
\begin{corollary}\label{coro:exact}
Assume that $\mathcal G$ is a compact Lie group. If for any $ij\in E$, $|G_{ij}|\geq 1$, and for any $ij\in E_b$, $g_{ij}$  is i.i.d.~sampled from an absolutely continuous distribution over $\mathcal G$, then with probability 1 any global minimum of \eqref{eq:desc} exactly recovers the corruption levels $\{s_{ij}^*\}_{ij \in E}$.
\end{corollary}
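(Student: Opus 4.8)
The plan is to derive Corollary~\ref{coro:exact} from Theorem~\ref{thm:main} by verifying the theorem's two hypotheses on an event of probability $1$. The first hypothesis, $|G_{ij}|\geq 1$ for all $ij\in E$, is assumed outright in the corollary, so nothing is required there. The entire content of the argument is therefore to show that the second hypothesis holds almost surely, namely that $d_{ij,k}>0$ for every cycle $ijk$ in which at least one of $ij$, $jk$, $ki$ is not in $E_g$. Once both hypotheses hold on an event of probability $1$, Theorem~\ref{thm:main} applies on that event and gives exact recovery of $\{s_{ij}^*\}_{ij\in E}$, which is the claim.

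First I would fix a single cycle $ijk$ containing a bad edge and, without loss of generality, assume $ij\in E_b$. Since $d$ is a metric, $d_{ij,k}=d(g_{ij}g_{jk}g_{ki}, e)=0$ if and only if $g_{ij}g_{jk}g_{ki}=e$, equivalently $g_{ij}=(g_{jk}g_{ki})^{-1}$; the right-hand side is a single fixed element of $\mathcal G$ once the other two measurements are given. I would then condition on all edge measurements except $g_{ij}$. The good edges are deterministic and the remaining bad edges are independent of $g_{ij}$, so $(g_{jk}g_{ki})^{-1}$ is a fixed (measurable) element under this conditioning. Because $g_{ij}$ is sampled independently with a law absolutely continuous with respect to the Haar measure on $\mathcal G$, which for a positive-dimensional compact Lie group is atomless, the conditional probability that $g_{ij}$ equals any prescribed element is $0$. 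Taking expectation over the conditioning yields $\Prob{d_{ij,k}=0}=0$.

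Next I would union bound over cycles. The set of $3$-cycles of $G$ containing a bad edge is a fixed finite collection (at most $\binom{n}{3}$ triples), so the union of the finitely many null events $\{d_{ij,k}=0\}$ over these cycles still has probability $0$. Hence, with probability $1$, every cycle containing a bad edge satisfies $d_{ij,k}>0$, which is exactly the second hypothesis of Theorem~\ref{thm:main}. Combining this with the deterministic first hypothesis and invoking the theorem completes the proof.

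The main obstacle is the measure-theoretic step: justifying that an absolutely continuous law assigns zero mass to the singleton $\{(g_{jk}g_{ki})^{-1}\}$. This is precisely where the structure of $\mathcal G$ enters, since on a positive-dimensional compact Lie group the Haar measure is atomless and absolute continuity then forces every point to have probability $0$; the independence of the isolated bad edge $g_{ij}$ from all other measurements is what allows the target element to be treated as fixed under the conditioning. A minor but worthwhile point to state carefully is that only one bad edge per cycle must be singled out, with any additional bad edges absorbed harmlessly into the conditioning, and that no assumption on the good edges is needed.
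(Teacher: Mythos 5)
Your proposal is correct and follows essentially the same route as the paper: the paper also reduces to verifying the hypotheses of Theorem~\ref{thm:main} almost surely, fixing a cycle with a bad edge $ij$ (WLOG) and arguing $\Pr(d_{ij,k}=0)=\Pr(g_{ij}=g_{ik}g_{kj})=0$ from the absolute continuity of the law of $g_{ij}$. Your conditioning argument, the explicit union bound over the finitely many cycles, and the remark that atomlessness requires the group to be positive-dimensional are details the paper's terse proof leaves implicit, but they do not constitute a different approach.
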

\begin{proof}
We claim that the assumptions of this corollary imply the conditions of Theorem \ref{thm:main} with probability 1. Indeed, if $ijk$ is a 3-cycle that contains at least one edge in $E_b$, where WLOG this bad edge is $ij$,  then $\Pr(d_{ij,k}=0)=\Pr(g_{ij} = g_{ik} \, g_{kj})=0$ due to the continuity of the density of $g_{ij}$. Thus with probability 1, $d_{ij,k}>0$.
\end{proof}
We remark that Corollary \ref{coro:exact} does not assume a specific probabilistic distribution and thus it is more general than previous probabilistic results by \citet{wang2013exact}. 
\begin{corollary}
Assume a compact Lie group $\mathcal G$ and data generated by UCM with $n$ nodes, probability $p$ of connecting two nodes $p$, and probability $q$ of corrupting an edge. Then for $n/\log n\geq 10/(p^2(1-q)^2)$, with probability at least $1-n^{-0.7}$ any global minimum of \eqref{eq:desc} 
exactly recovers the corruption levels $\{s_{ij}^*\}_{ij \in E}$.
\end{corollary}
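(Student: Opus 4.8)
The plan is to derive this from Corollary~\ref{coro:exact}, whose hypotheses are (i) that $\mathcal{G}$ is a compact Lie group, (ii) that $|G_{ij}| \geq 1$ for every $ij \in E$, and (iii) that each corrupted measurement $g_{ij}$, $ij \in E_b$, is drawn from an absolutely continuous distribution on $\mathcal{G}$. Under UCM, (i) is assumed outright, and (iii) is automatic: corrupted edges carry i.i.d.\ Haar-distributed measurements, and the Haar measure on a positive-dimensional compact Lie group has no atoms, which is exactly the property used in the proof of Corollary~\ref{coro:exact} (it only needs $\Pr(g_{ij}=g_{ik}g_{kj})=0$). So the entire probabilistic content reduces to showing that the purely combinatorial event
\[
A := \{\,|G_{ij}|\geq 1 \text{ for all } ij \in E\,\}
\]
holds with probability at least $1 - n^{-0.7}$. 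Since $A$ depends only on the random graph structure and the corruption pattern, and is therefore independent of the corrupted measurement values, Corollary~\ref{coro:exact} applies conditionally on $A$ to give exact recovery with conditional probability $1$; hence $\Pr(\text{recovery}) \geq \Pr(A) \geq 1 - n^{-0.7}$.

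To bound $\Pr(A)$ I would fix an edge $ij$ and compute the probability that it has no good cycle. A node $k \neq i,j$ contributes to $G_{ij}$ if and only if both $ik$ and $jk$ are present and clean, which under UCM happens with probability $p^2(1-q)^2$; these events are independent across the $n-2$ candidate nodes $k$ (they involve disjoint edge pairs) and independent of the presence of $ij$ itself. Hence $\Pr(|G_{ij}|=0 \mid ij \in E) = (1 - p^2(1-q)^2)^{n-2}$. A union bound over the at most $\binom{n}{2}$ potential edges (each present with probability $p \leq 1$), together with $1 - x \leq e^{-x}$, then gives
\[
\Pr(A^c) \leq \binom{n}{2}(1 - p^2(1-q)^2)^{n-2} \leq \tfrac{n^2}{2}\,e^{-p^2(1-q)^2(n-2)}.
\]

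Finally I would substitute the sample-complexity hypothesis $p^2(1-q)^2 \geq 10\log n / n$. For $n \geq 4$ we have $n-2 \geq n/2$, so $p^2(1-q)^2(n-2) \geq 5\log n$, whence $e^{-p^2(1-q)^2(n-2)} \leq n^{-5}$ and $\Pr(A^c) \leq \tfrac12 n^{2} n^{-5} = \tfrac12 n^{-3} \leq n^{-0.7}$. This completes the argument. The estimate is entirely routine, and the generous exponent ($-3$ against the required $-0.7$) shows that the constant $10$ in the hypothesis leaves a wide margin; the only points deserving genuine care are the reduction bookkeeping, namely correctly combining the conditional probability-$1$ conclusion of Corollary~\ref{coro:exact} with the first-moment bound on $\Pr(A)$, and confirming that Haar measure indeed satisfies the absolute-continuity hypothesis so that Corollary~\ref{coro:exact} may be invoked at all.
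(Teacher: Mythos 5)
Your proof is correct, and it shares the paper's overall skeleton: both arguments reduce the corollary to showing that the purely combinatorial event $\min_{ij\in E}|G_{ij}|\geq 1$ holds with probability at least $1-n^{-0.7}$, and both finish with a union bound over the (at most $n^2$) edges. The difference lies in the per-edge estimate and in how explicitly the reduction is justified. The paper views $|G_{ij}|$ as a sum of i.i.d.\ Bernoulli variables with mean $\mu=p^2(1-q)^2$ and applies a multiplicative Chernoff bound, obtaining $\Pr(|G_{ij}|\geq 1) > 1-e^{-\frac{27}{100}p^2(1-q)^2 n}$ once $np^2(1-q)^2\geq 10\log n$; after the union bound this produces exactly the exponent $2-\tfrac{27\cdot 10}{100}=-0.7$. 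You instead compute the failure probability in closed form, $\Pr\bigl(|G_{ij}|=0 \mid ij\in E\bigr)=(1-p^2(1-q)^2)^{n-2}$, and use only the inequality $1-x\leq e^{-x}$; this avoids any concentration inequality and yields the stronger bound $\tfrac12 n^{-3}\leq n^{-0.7}$, which makes visible that the constant $10$ in the hypothesis has considerable slack. (Your side condition $n\geq 4$ is harmless, since $p^2(1-q)^2\leq 1$ forces $n/\log n\geq 10$ and hence $n\geq 36$.) You are also more explicit than the paper about the reduction itself: the paper simply asserts that it suffices to control $|G_{ij}|$, whereas you note that the event $A$ depends only on the graph and corruption pattern, that conditionally on $A$ the corrupted measurements remain i.i.d.\ Haar and hence atomless on a positive-dimensional compact Lie group, so that Corollary~\ref{coro:exact} applies conditionally with probability $1$. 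Both routes are valid; yours is the more elementary and quantitatively tighter one, while the paper's Chernoff computation is the more standard template.
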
\label{coro:SC}
\begin{proof}
It suffices to show that under the assumption of this corollary, $|G_{ij}|\geq 1$ is satisfied with high probability. We first observe that $X_k :=\mathbf{1}_{\{k\in G_{ij}\}}$, for $k\in [n]$, are i.i.d.~Bernoulli random variables with mean $\mu=p^2(1-q)^2$. Applying the Chernoff bound to $X_k$ yields
\begin{align}\label{eq:Gij2}
    \Pr(|G_{ij}|\geq 1)= \Pr\Big(\frac{1}{n}\sum_{k\in [n]} X_k\geq \frac{\mu}{np^2(1-q)^2}\Big)\nonumber\\
    > 1-e^{-\frac13\mleft(1-\frac{1}{np^2(1-q)^2}\mright)^2 p^2(1-q)^2 n}.
\end{align}
If $n/\log n\geq  10/(p^2(1-q)^2)$, then $1/(n p^2(1-q)^2)<1/10$ for $n>2$ and thus \eqref{eq:Gij2} implies that 
\begin{align*}
    \Pr(|G_{ij}|\geq 1) 
    >  1-e^{-\frac{27}{100}p^2(1-q)^2 n}.
\end{align*}
By taking a union bound over $ij\in E$ and applying the assumption $n/\log n\geq c/(p^2(1-q)^2)$ for $c\geq 10$, we obtain that
\begin{align*}
    &\Pr(\min_{ij\in E}|G_{ij}|\geq 1) 
    >  1-n^2e^{-\frac{27}{100}p^2(1-q)^2 n}\\
    \geq &  1-n^2 e^{-\frac{27c}{100}\log n}
    = 1-n^{2-\frac{27c}{100}} \geq 1- n^{-0.7}.\quad \quad \qedhere
\end{align*}
\end{proof}
Corollary \ref{coro:SC} implies that the sample complexity for the DESC framework is $n/\log n = \Omega(p^{-2}(1-q)^{-2})$, where the order of $1-q$ matches the information-theoretic one.
On the other hand, as explained in \citet{cemp}, due to the use of 3-cycles one cannot improve the dependence on $p$.

\subsection{Optimization of the DESC Framework}\label{sec:opt}
We optimize the quadratic program in \eqref{eq:qp3} by a projected gradient descent (PGD) method.
At each iteration $t$, let $\{s_{ij}^{(t)}\}_{ij\in E}$ and $\{\bp_{ij}^{(t)}\}_{ij\in E}$ be the corresponding estimates of a minimizer of \eqref{eq:desc} (which is equivalently obtained via \eqref{eq:qp3} and \eqref{eq:from_desc_to_qp}). Denote the objective function in \eqref{eq:qp3} by $f(\{\bp_{ij}\}_{ij\in E})$ and its gradient with respect to $\bp_{ij}$ at the estimates $p_{ij}^{(t)}$ and $s_{ij}^{(t)}$, $ij \in E$, by
\begin{align}
    \grad_{ij}^{(t)} f:=\mleft(\mleft. 
    \frac{\partial f}{\partial \bp_{ij}(k)}\mright)_{k\in C_{ij}} \mright|_{\latop{\{\bp_{ij}\}_{ij\in E} = \{\bp_{ij}^{(t)}\}_{ij\in E}}{\{s_{ij}\}_{ij\in E} = \{s_{ij}^{(t)}\}_{ij\in E}}},
\end{align}
where
\begin{align}\label{eq:grad}
 &\frac{\partial f}{\partial \bp_{ij}(k)}\\
 &=s_{ik}+s_{jk}+\left(\sum_{l\in C_{ij}} \bp_{il}(j) + \bp_{jl}(i)\right)d_{ij,k}. \nonumber  
\end{align}
Recall that for each $ij\in E$, $\bp_{ij}\in \Delta(|C_{ij}|)$. Note that $\Delta(|C_{ij}|)$ is contained in the hyperplane $$H_{ij}:=\{\bx\in \mathbb R^{|C_{ij}|}: \sum_{i=1}^{|C_{ij}|} x_i =1\}$$ and that $H_{ij}$ is the tangent space of $\Delta(|C_{ij}|)$ in $\mathbb R^{|C_{ij}|}$. We further note that the orthogonal projector onto $H_{ij}$ is $\boldsymbol{1}\boldsymbol{1}^\top/|C_{ij}|$, where 
 $\boldsymbol{1}$ is the all-one vector of length $|C_{ij}|$. 
 Therefore the corresponding Riemannian gradient \cite{boumal2020book} is
 $$\widetilde{\grad}_{ij}^{(t)}f=(\boldsymbol{I}-\boldsymbol{1}\boldsymbol{1}^\top /|C_{ij}|)\grad_{ij}^{(t)}f,$$ where $\boldsymbol{I}$ denotes the identity matrix.

Our projected gradient descent method updates  $\bp^{(t+1)}_{ij}$ at each iteration using the Riemannian gradient and then projects onto $\Delta(|C_{ij}|)$. That is, for $ij\in E$
\begin{align*}
&\widetilde\bp_{ij}^{(t+1)}=\bp_{ij}^{(t)}-\alpha_t \widetilde{\grad}_{ij}^{(t)} f 
\ \ \text{ and } \ \\
 &\bp_{ij}^{(t+1)}=\text{Proj}_{\Delta(|C_{ij}|)}(\widetilde\bp_{ij}^{(t+1)}).
\end{align*}
The projector onto $\Delta(|C_{ij}|)$ can be computed following the method of \citet{wang2013projection}: For each fixed $ij\in E$,
\begin{align}
    \bp_{ij}^{(t+1)}=\max(\widetilde \bp_{ij}^{(t+1)}-\tau, 0),
\end{align}
where the parameter $\tau\in\mathbb R$ is the solution of  
$$\sum_{k\in C_{ij}}\max( \widetilde\bp_{ij}^{(t+1)}(k)-\tau, 0)=1.$$ 
We remark that $h(\tau) = \sum_{k\in C_{ij}}\max( \widetilde\bp_{ij}^{(t+1)}(k)-\tau, 0)$ is a piecewise linear function where the endpoints of the piecewise intervals are $\{\widetilde\bp_{ij}^{(t+1)}(k)\}_k$. We order $\widetilde\bp_{ij}^{(t+1)}(k)$ by their values from low to high, and find $k$ such that $h(\widetilde\bp_{ij}^{(t+1)}(k))\leq 1$ and $h(\widetilde\bp_{ij}^{(t+1)}(k+1))> 1$. In this way, the range of $\tau$ is narrowed down to $[\widetilde\bp_{ij}^{(t+1)}(k), \widetilde\bp_{ij}^{(t+1)}(k+1))$, and on this interval $h(\tau)$ is a linear function and $h(
\tau)=1$ can be easily solved.

We refer to this procedure as DESC-PGD and summarize it in  Algorithm \ref{alg:DESC}. We remark that our proposed PGD is analogous to the Riemannian gradient descent method in \citet{boumal2020book}, except that our projection onto the simplex is not a valid retraction.
\begin{algorithm}[h]
\caption{DESC-PGD}\label{alg:DESC}
\begin{algorithmic}
\REQUIRE $\{g_{ij}\}_{ij\in E}$, $\{d_{ij,k}\}_{k\in C_{ij}}$, $t_{\max}$
\STATE \textbf{Steps:}
\STATE $\bp_{ij}^{(0)}=\mathbf{1}/|C_{ij}|$
\hspace*{\fill} $ij\in E$
\FOR {$t=1:t_{\max}$}
\STATE $\widetilde\bp_{ij}^{(t)}=\bp_{ij}^{(t-1)}-\alpha_{t-1} \widetilde{\grad}_{ij}^{(t-1)} f$\hspace*{\fill} $ij\in E$ 
\STATE  $\bp_{ij}^{(t)}=\text{Proj}_{\Delta(|C_{ij}|)}(\widetilde\bp_{ij}^{(t)})$ \hspace*{\fill} $ij\in E$ 
\STATE $s_{ij}^{(t)} = (\bp_{ij}^{(t)})^\top \bd_{ij}$ \hspace*{\fill} $ij\in E$ 
\ENDFOR
\ENSURE $\widehat s_{ij} = s_{ij}^{t_{\max}}$
\end{algorithmic}
\end{algorithm}

In practice, to accelerate the implementation of DESC, instead of using all the 3-cycles, one may use a randomly sampled subset. That is, for each $ij\in E$, $C_{ij}$ is a randomly sampled set of nodes $k$ such that $ik, jk\in E$.

\subsection{Computational Complexity of DESC-PGD}\label{sec:complexity}
At each iteration of DESC-PGD, the gradient computation in \eqref{eq:grad} requires the sum of $\bp_{il}(j)$ and $\bp_{jl}(i)$ over $l\in C_{ij}$ for each $ij\in E$, which takes $O(|E|c)$ computation time where $c$ is the average of $|C_{ij}|$. The projection onto $\Delta(|C_{ij}|)$ has the same $O(|E|c)$ complexity. Since the complexity of computing $d_{ij,k}$ is also $O(|E|c)$, the per-iteration time complexity of DESC is $O(|E|c)$, which is exactly the same as that of CEMP.

\subsection{Estimation of General Group Elements}
\label{sec:estimate_group_elements}
We follow ideas of \citet{cemp} to 
estimate the group elements, $\{g^*_{i}\}_{i\in [n]} \subset \mathcal G$, using $\{\widehat s_{ij}\}_{ij\in E}$. We assume that $\mathcal G$ is a subgroup of the orthogonal group $O(D)$. 
For this purpose we use the graph connection weight (GCW) matrix \cite{VDM_singer}, which aims to approximately solve the following weighted least squares problem:
\begin{align}\label{eq:wls}
    \min_{\{g_i\}_{i\in [n]}\subset \mathcal G}\sum_{i\in [n]}\sum_{j\in N_i}w_{ij}d^2(g_i g_j^{-1}, g_{ij}),
\end{align}
where $N_i = \{j: ij\in E\}$ is a set of neighboring nodes of $i$ and $w_{ij}$ is a normalized graph weight such that $\sum_{j\in N_i}w_{ij}=1$. In practice, we compute $w_{ij}$ by normalizing $\widehat s_{ij}^{-3/2}$. We represent each group element,  $g_i$, by a $D \times D$ orthogonal matrix and stack these matrices to form an $nD \times D$ block matrix $\bY$ whose $i$-th block is the matrix representation of $g_i$. We initially estimate $\bY$ by finding the top $D$ eigenvectors of the block matrix $\bX$, where the $(i,j)$-th black of $\bX$ is $w_{ij}g_{ij}$, and each $g_{ij}$ is represented as its corresponding orthogonal matrix. Then we project each block of the initially estimated $\bY$ onto $\mathcal G$ to obtain the estimated group elements.

\subsection{A Refined Solution for Rotation Averaging}\label{sec:refine}
For rotation averaging, we propose using the DESC-based GCW procedure of \S\ref{sec:estimate_group_elements} to initialize the absolute rotations. We then suggest using the $\widehat s_{ij}$'s obtained by DESC-PGD to improve the IRLS algorithm of \citet{IRLS_L12} and thus refine the initialized rotations.

We first briefly review the latter IRLS algorithm. For $i \in [n]$ and  $t \in \mathbb{N}$, let $\bR_i^{(t)}$ denote the absolute rotation matrix estimated by IRLS at iteration $t$. For $ij \in E$, let $\bR_{ij}$ denote the input relative rotation matrix.
IRLS updates at each iteration the estimated absolute rotations. Given $\bR_{i}^{(t-1)}$, $i \in [n]$, it solves an optimization problem for the matrices $\Delta \bR_{i}^{(t)}$, $i \in [n]$, which satisfy  $\bR_{i}^{(t)}=\bR_{i}^{(t-1)}\Delta \bR_{i}^{(t)}$ (note that $\Delta \bR_{i}^{(t)}$ approaches $\boldsymbol I$ as $t$ approaches infinity).
The desired optimization is the weighted least squares of \eqref{eq:wls} with iteratively updated edge weights,  $w_{ij}^{(t)}$, $ij \in E$, where $g_i$, $g_j$, $g_{ij}$ are replaced by $\Delta \bR_{i}^{(t)}$, $\Delta \bR_{j}^{(t)}$, $(\bR_{i}^{(t-1)})^\top\bR_{ij}\bR_{j}^{(t-1)}$, respectively.  
This formulation is further approximated by mapping, at each iteration, the rotation matrices in $SO(3)$ to the tangent space of $\boldsymbol I$, $\mathfrak{so}(3)$, by the matrix logarithm, $\log$.
We denote the mapped elements by $\Delta\Omega_{i}^{(t)} = \log\Delta \bR_{i}^{(t)}$, $i\in [n]$, and 
\begin{equation}
\label{eq:Delta_Omega_ij}
    \Delta\Omega_{ij}^{(t)}=\log((\bR_{i}^{(t-1)})^\top\bR_{ij}\bR_{j}^{(t-1)}), \quad ij\in E.  
\end{equation}
\citet{IRLS_L12} approximate \eqref{eq:wls} by
minimizing  over $\{\Delta \boldsymbol\Omega_{i}^{(t)}\}_{i\in [n]}\subset\mathfrak{so}(3)$ the function
\begin{equation}\label{eq:delta_omega}
    \sum\limits_{ij\in E} w_{ij}^{(t)}\|\Delta \boldsymbol\Omega_{i}^{(t)}-\Delta \boldsymbol\Omega_{j}^{(t)}-\Delta \boldsymbol\Omega_{ij}^{(t)}\|^2_F, 
\end{equation}
Next, they compute 
for any edge $ij \in E$ the residual 
$r_{ij}^{(t)}:=\|\Delta \boldsymbol\Omega_{i}^{(t)}-\Delta \boldsymbol\Omega_{j}^{(t)}-\Delta \boldsymbol\Omega_{ij}^{(t)}\|_F/\sqrt{2 \pi^2}$ 
and update the weights by $w_{ij}^{(t+1)} = (r_{ij}^{(t)})^{-3/2}$. 
The basic idea is that edges with higher residuals are likelier to be corrupted and thus should be assigned smaller weights. 

We modify this IRLS procedure as follows. First, we initialize the rotations by GCW, which uses the output of DESC-PGD (see \S\ref{sec:estimate_group_elements}). Our numerical experiments indicate that this initialization is often more accurate than IRLS \citep{IRLS_L12}. Second, we replace the residuals $r_{ij}^{(t)}$ in IRLS by a convex combination of $r_{ij}^{(t)}$ and DESC-estimated $\widehat s_{ij}$, where the coefficient of $r_{ij}^{(t)}$ is $t/(t+1)$. Consequently, the information from the residual is increasingly emphasized and $\widehat s_{ij}$ is mainly used to guide IRLS to escape the local minima in the first few iterations.
At last, after computing the edge weights, we assign the weight $10^{-8}$ to a certain percentage of the edges with the lowest weights (the chosen percentage at iteration $t$ is $\min(5t\,,20)$). We do not assign 0 weights (i.e., completely remove them) in order to avoid a disconnected graph. 
The last two ideas are also used in MPLS \citep{MPLS}).
Nevertheless, our rotation refinement is also different from MPLS in the following ways. First, MPLS uses a minimal spanning tree to initialize rotations which results in inaccuracies when all edges are noisy. Second, MPLS also uses a message passing unit to update edge weights in each iteration, which is more complex than our method. 

Algorithm \ref{alg:DESC-SO3} describes our overall solution to rotation averaging, which we refer to as DESC-$SO(3)$, or just DESC. We refer to the initialization of this solution (obtained in the second step of Algorithm \ref{alg:DESC-SO3}) by DESC-init.

\begin{algorithm}[h]
\caption{DESC-$SO(3)$ (DESC)}\label{alg:DESC-SO3}
\begin{algorithmic}
\REQUIRE $\{\bR_{ij}\}_{ij\in E}$, $\{d_{ij,k}\}_{k\in C_{ij}}$
\STATE \textbf{Steps:}
\STATE  Compute $\{\widehat s_{ij}\}_{ij\in E}$ by DESC-PGD
\STATE Initialize $\{\bR_i^{0}\}_{i\in [n]}$ by DESC-based GCW (see \S \ref{sec:estimate_group_elements}).
\STATE $t=0$
\STATE $w_{ij}^{(0)}=\min({\widehat s_{ij}}^{-3/2}, 10^8)$ \hspace*{\fill} $ij\in E$
\WHILE {not convergent}
\STATE $t=t+1$
\STATE Compute $\Delta \boldsymbol\Omega_{ij}^{(t)}$ according to \eqref{eq:Delta_Omega_ij}
\hspace*{\fill} $ij\in E$
\STATE Find $\{\Delta \boldsymbol\Omega_{i}^{(t)}\}_{i\in [n]}$ as the minimizer of \eqref{eq:delta_omega} over $\mathfrak{so}(3)^n$
\STATE $\bR_{i}^{(t)}=\bR_{i}^{(t-1)}\exp(\Delta \boldsymbol\Omega_{i}^{(t)})$ \hspace*{\fill} $i\in [n]$
\STATE $r_{ij}^{(t)}=\|\Delta \boldsymbol\Omega_{i}^{(t)}-\Delta \boldsymbol\Omega_{j}^{(t)}-\Delta \boldsymbol\Omega_{ij}^{(t)}\|_F/(\sqrt2\pi)$ \hspace*{\fill} $ij\in E$
\STATE $h_{ij}^{(t)}=(t \cdot r_{ij}^{(t)} + \widehat s_{ij})/(t+1)$
\hspace*{\fill} $ij\in E$
\STATE $w_{ij}^{(t)}=\min((h_{ij}^{(t)})^{-3/2}, 10^8)$ \hspace*{\fill} $ij\in E$
\STATE $\tau_t = \min(5t\,,20)$
\STATE $w_{ij}^{(t)}=10^{-8}$ for $\tau_t\%$ of edges with the highest $h_{ij}^{(t)}$
\ENDWHILE
\ENSURE $\left\{\bR_{i}^{(t)}\right\}_{i\in [n]}$
\end{algorithmic}
\end{algorithm}

\section{Experiments}\label{sec:experiment}
We test our methods for rotation averaging. In \S\ref{sec:implement}, we describe the implementation details of all tested algorithms. In \S\ref{sec:synthetic} we report the estimation of the corruption levels and rotations on synthetic data generated by UCM for $SO(3)$. In \S\ref{sec:real}, we compare the performance of different algorithms on the Photo Tourism dataset \cite{1dsfm14}.
\subsection{Implementation Details of All Algorithms}\label{sec:implement}
We first compare our QP scheme with the following linear programming (LP) method for estimating corruption levels:
\begin{align}\label{eq:LP}
    \min_{s_{ij}} &\sum_{ij\in E}s_{ij}\\\nonumber
    \text{subject to }
    &|s_{ij}-d_{ij,k}|\leq s_{ik}+s_{jk}\\\nonumber
    & 0\leq s_{ij}\leq 1,
\end{align}
where the first constraint is due to \eqref{eq:dijk_sij}.
We solve \eqref{eq:LP} using the default Matlab CVX LP solver. Using this solution for the corruption levels, one can apply the same post-processing as DESC to estimate the rotations (see \S\ref{sec:estimate_group_elements}).
This LP formulation is very similar to that of \citet{agarwal2020rank} except that \citet{agarwal2020rank} is designed for rank aggregation. It is also similar to that of \citet{Zach2010}, but \citet{Zach2010} with an additional penalty in the form of the sum over cycles of maximal corruption levels within each cycle.  
The objective function in \eqref{eq:LP} is based on the assumption that the overall corruption level of the graph is small and thus does not apply to highly corrupted scenarios. In contrast, DESC aims to enforce the orthogonality of $\bv_{ij}$ and $\bp_{ij}$ (see \eqref{eq:desc}), which seems also relevant to high corruption.

We also compare DESC with DESC-init and competitive GS methods. 
We test two versions of IRLS: IRLS-GM \cite{ChatterjeeG13_rotation} and IRLS-$\ell_{1/2}$ \cite{IRLS_L12}, while using their default implementations. 
These versions use the Geman McClure (GM) and $\ell_{1/2}$ losses.  
We implement CEMP \cite{cemp} and MPLS \cite{MPLS} using the codes provided by the respective papers, with their default parameters.
Following \citet{cemp}, we use CEMP for recovering the corruption levels and both CEMP+MST and CEMP+GCW to recover rotations.
CEMP+MST uses the minimum spanning tree (MST) as a post-processing step to estimate rotations, and CEMP+GCW uses GCW as in \S\ref{sec:estimate_group_elements}. 
 
For the synthetic data experiments, we ran DESC with a constant step size of 0.01. The maximum number of iterations was set to 100. We noticed that increasing this number improved the accuracy, but we preferred a reasonable runtime (we will discuss the tradeoff between the two later). To further reduce the runtime, we also sampled (without replacement) a subset of the cycles of each edge. The number of cycles sampled was chosen as one quarter of the median number of cycles per edge, or at least $30$. For edges with fewer cycles than the sample number, all cycles were used. No other parameters needed to be tuned. 

For real data, due to the large sizes of the datasets, we increased the step size to 1 in order to accelerate the convergence and we decreased the maximum number of iterations to 30. Otherwise, all parameter settings were identical. 


\subsection{Synthetic Data Experiments}\label{sec:synthetic}

We compare DESC and other algorithms on synthetic data generated according to UCM, with and without noise.
The underlying graph is generated by an Er\H{o}s-Rényi model $G(n,p)$ where $n=100$ and $p=0.5$ (two nodes are connected by an edge with probability $p$). 
The group is  $\mathcal{G} = SO(3)$ and we represent its elements (rotations) by $3 \times 3$ rotation matrices. Let
$\Haar(SO(3))$ denote the Haar (or ``uniform'') probability measure on $SO(3)$ and for $ij \in E$, let $W_{ij}$ be a $3 \times 3$ Wigner matrix (with i.i.d. standard normal elements).
For $0 \leq q <1$ and $\sigma \geq 0$, the following corruption model generates the rotation measurements:
\[
g_{ij} = 
\begin{cases}
\text{Proj}(g_{ij}^{*} + \sigma W_{ij}), \text{  with probability\,} 1-q;\\
\tilde{g}_{ij} \sim \Haar(SO(3)), \text{  with probability \,} q.
\end{cases}
\]
That is, a group element is corrupted with probability $q$ and in this case it is i.i.d.~sampled from the ``uniform'' measure on $SO(3)$ and otherwise its value is obtained by adding noise to the the ground-truth group ratio $g_{ij}^{*}$ with constant noise level $\sigma$. The resulting noisy matrix is then projected to $SO(3)$.




\begin{figure}[htbp]
\vskip 0.2in
\begin{center}
\centerline{\includegraphics[width=\columnwidth]{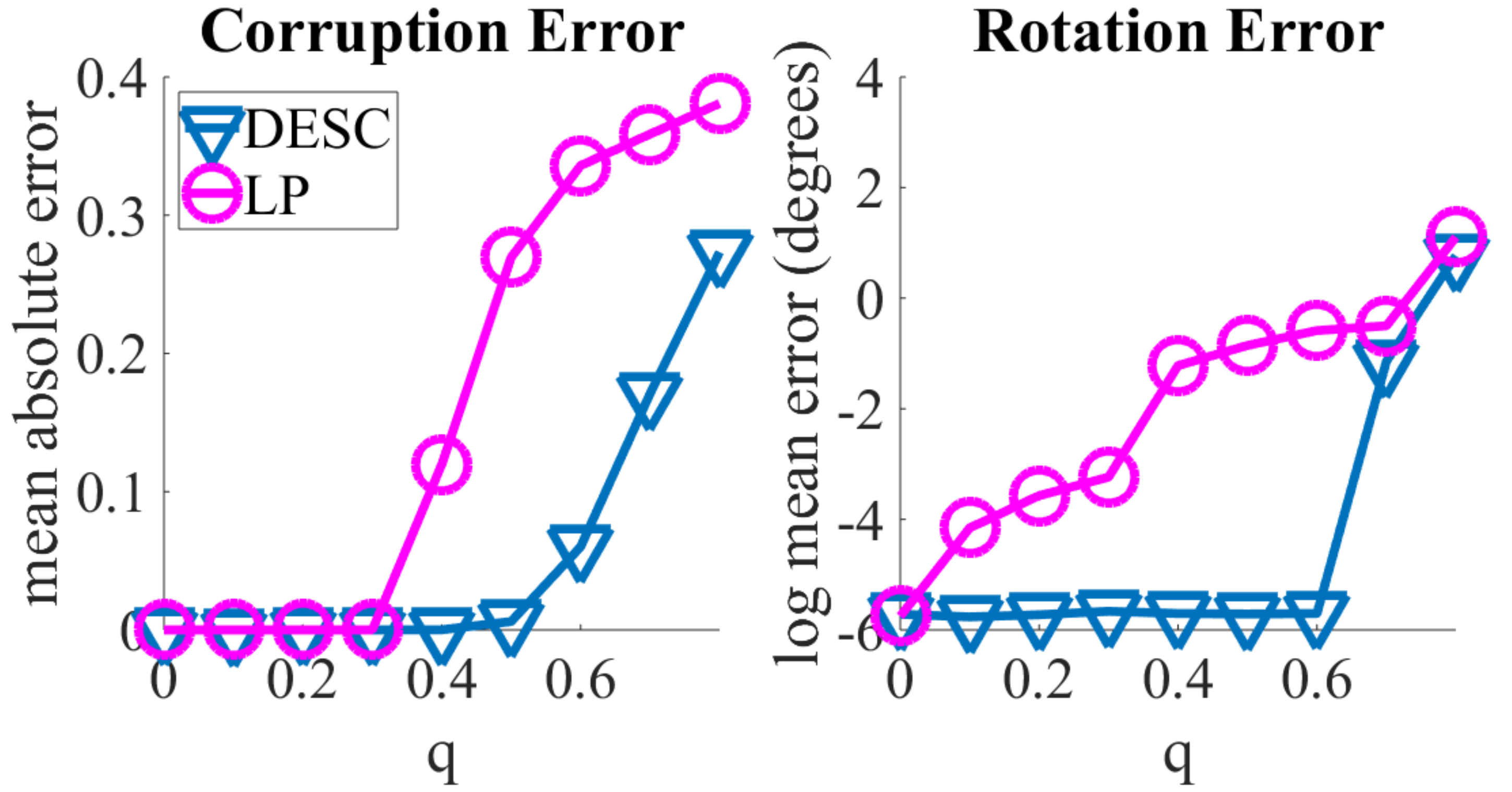}}
\caption{Left: mean absolute error for corruption estimation of both DESC and linear programming, Right: log mean error in degrees for the rotation estimates of DESC and linear programming. }
\label{desc_vs_linprog}
\end{center}
\vskip -0.2in
\end{figure}

Using synthetic data generated from this model, we first compare our QP formulation with the LP formulation of \S\ref{sec:implement}. Since both of them aim to find the corruption levels, we compute the following absolute error for corruption estimation:
\begin{equation}
\label{eq:corruption_error}
\frac{1}{|E|}\sum_{ij\in E}|\widehat s_{ij}-s_{ij}^*|.
\end{equation}

Figure \ref{desc_vs_linprog} shows the absolute mean errors for corruption estimation and log mean errors for rotation estimation of both LP and QP  with $\sigma=0$ and varying $q$. 
In its first plot (on left), QP performs significantly better than LP for corruption estimation when $q \geq 0.4$. This is due to the underlying assumption of the LP formulation that the overall edge corruption level is small. In its second plot (on right), DESC significantly outperforms LP with all values of $q>0$.
For fair comparison, we post-processed LP for rotation estimation with the same steps of Algorithm \ref{alg:DESC-SO3}.
Interestingly, even when $q$ is small, the rotation estimates of LP are much worse than DESC, unlike the corruption estimates. The reason is that LP tends to underestimate the corruption levels due to its objective function. Underestimation of a small fraction of corruption levels as nearly 0 results in nearly infinite edge weights of the corresponding edges and consequently inaccurate rotation estimation. Due to the overall poor performance of LP, we ignore it in the rest of the experiments.

\begin{table*}[h]
\centering
\caption{Average of the mean and median errors (in degrees) for rotation estimates across the 13 datasets of Photo Tourism}
\begin{tabular}{| l || c | c | c | c | c| c| c| } 
\hline 
 & DESC & DESC-init & IRLS-GM & IRLS-$L_{\frac{1}{2}}$ & CEMP-MST & CEMP-GCW & MPLS \\ 
   \hline \hline
mean & \textbf{3.5119} & 3.8354 & 3.9644 & 3.8447 & 4.1447 & 3.9191 & 3.7142 \\ 
\hline
median & \textbf{1.5938} & 1.8516 & 1.7255 & 1.7201 & 1.7975 & 2.0339 & 1.7032 \\
\hline
   \end{tabular}
\label{tab:average_across_datasets} 
\end{table*}

\begin{figure}[ht]
\vskip 0.2in
\begin{center}
\centerline{\includegraphics[width=\columnwidth]{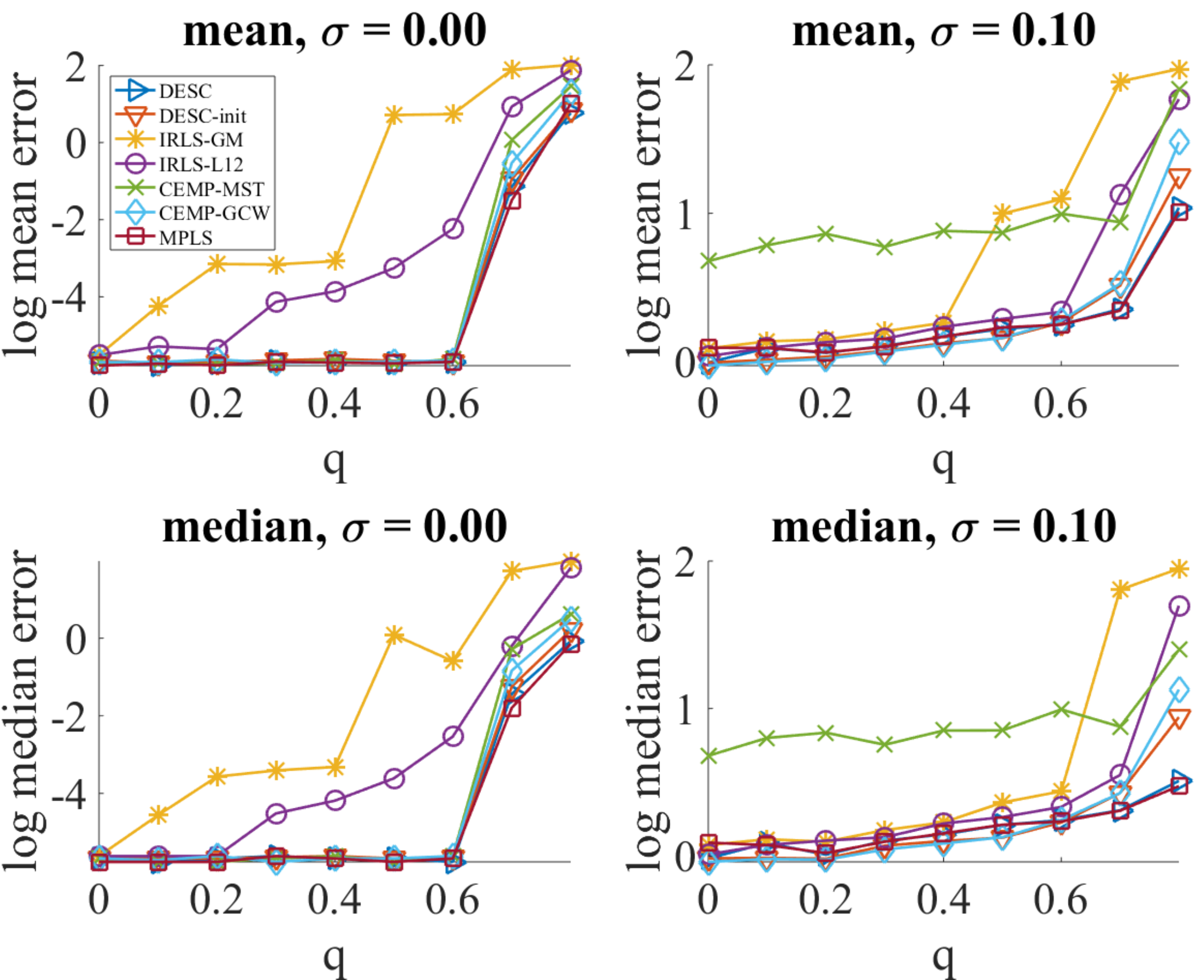}}
\caption{Mean and median errors (in degrees) for rotation estimation of different algorithms (see legend) using  the synthetic data with varying $q$ and $\sigma$. Top: mean, bottom: median, left: $\sigma = 0$ and right $\sigma = 0.1$. We applied $\log$ base 10 to the $y$ axis.}
\label{synthetic_mean_errors}
\end{center}
\vskip -0.2in
\end{figure}

Next, we ran all algorithms, except LP, on synthetic datasets generated with $q =0,0.1,0.2, \ldots, 0.8$ and both $\sigma = 0$ and $\sigma = 0.1$. 
Figure \ref{synthetic_mean_errors} reports the mean and median errors of rotation estimates by all tested methods. Because the values varied by several orders of magnitude, we used a logarithmic scale (base 10) for the $y$-axis.
In all cases, DESC is comparable to MPLS. We note that DESC-init consistently outperforms CEMP-GCW, where both methods use the same GCW postprocessing for rotation estimation.

\begin{figure}[htbp]
\vskip 0.2in
\begin{center}
\centerline{\includegraphics[width=\columnwidth]{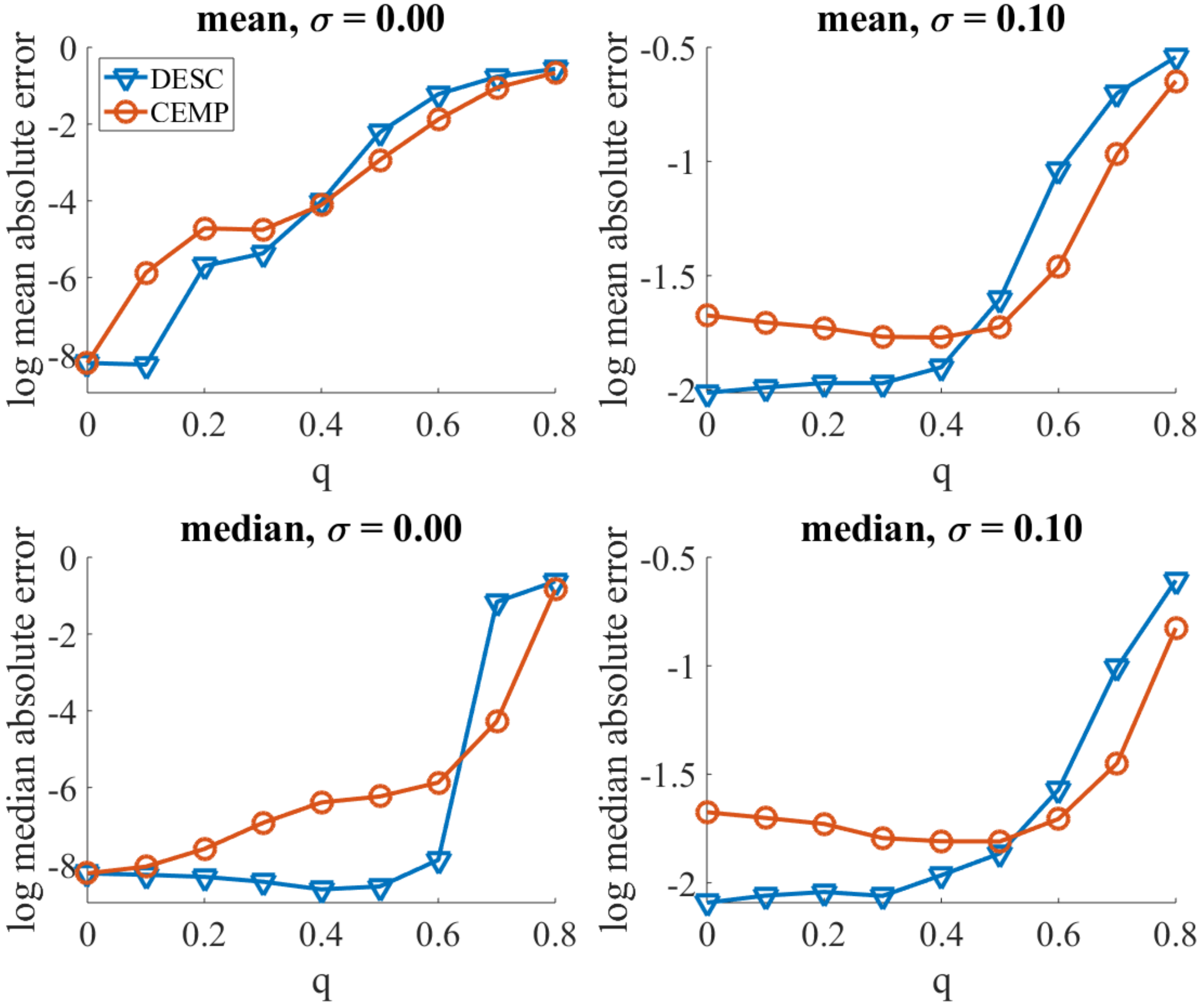}}
\caption{Mean and median absolute error of corruption estimation for DESC and CEMP using the synthetic data. The upper two plots show the means and the lower two plots show the medians. The $y$ axis uses a logarithmic scale with base 10.}
\label{synthetic_svec_errors}
\end{center}
\vskip -0.2in
\end{figure}

Figure \ref{synthetic_svec_errors} shows the absolute estimation errors of the corruption levels (see \eqref{eq:corruption_error}) by DESC and CEMP using a logarithmic $y$-axis scale as in Figure \ref{synthetic_mean_errors}.
We note that overall the accuracy of DESC and CEMP for corruption estimation is comparable. In particular, in terms of the mean estimation error, DESC is more successful when $q$ is small, whereas CEMP is more advantageous when $q$ is large. In terms of the median error, DESC consistently outperforms CEMP for almost all values of $q$ when $\sigma=0$ and is several orders of magnitudes more accurate. When $\sigma=0.1$, DESC yields slightly higher median error than that of CEMP for high $q$, and has much lower median error than CEMP for low $q$.

The per-iteration runtimes of DESC and CEMP on synthetic data were 0.06 and 0.02 seconds, respectively. While DESC is fast per iteration, it requires dozens of iterations to converge, making it slower than CEMP, even though both of them have the same order of computational complexity.

\subsection{Real Data Experiments}\label{sec:real}
For experiments with real data, we used the Photo Tourism dataset, which was introduced in \citet{1dsfm14}. It contains hundreds of images along with the approximate ground truth rotations estimated by the bundler software \cite{snavely2006photo}. The relative rotations are estimated following the pipeline presented in \citet{ozyesil2015robust}. 
We ran DESC along with the above benchmarks (excluding LP) on the 14 Photo Tourism datasets.

\begin{figure}[t]
\vskip 0.2in
\begin{center}
\centerline{\includegraphics[width=\columnwidth]{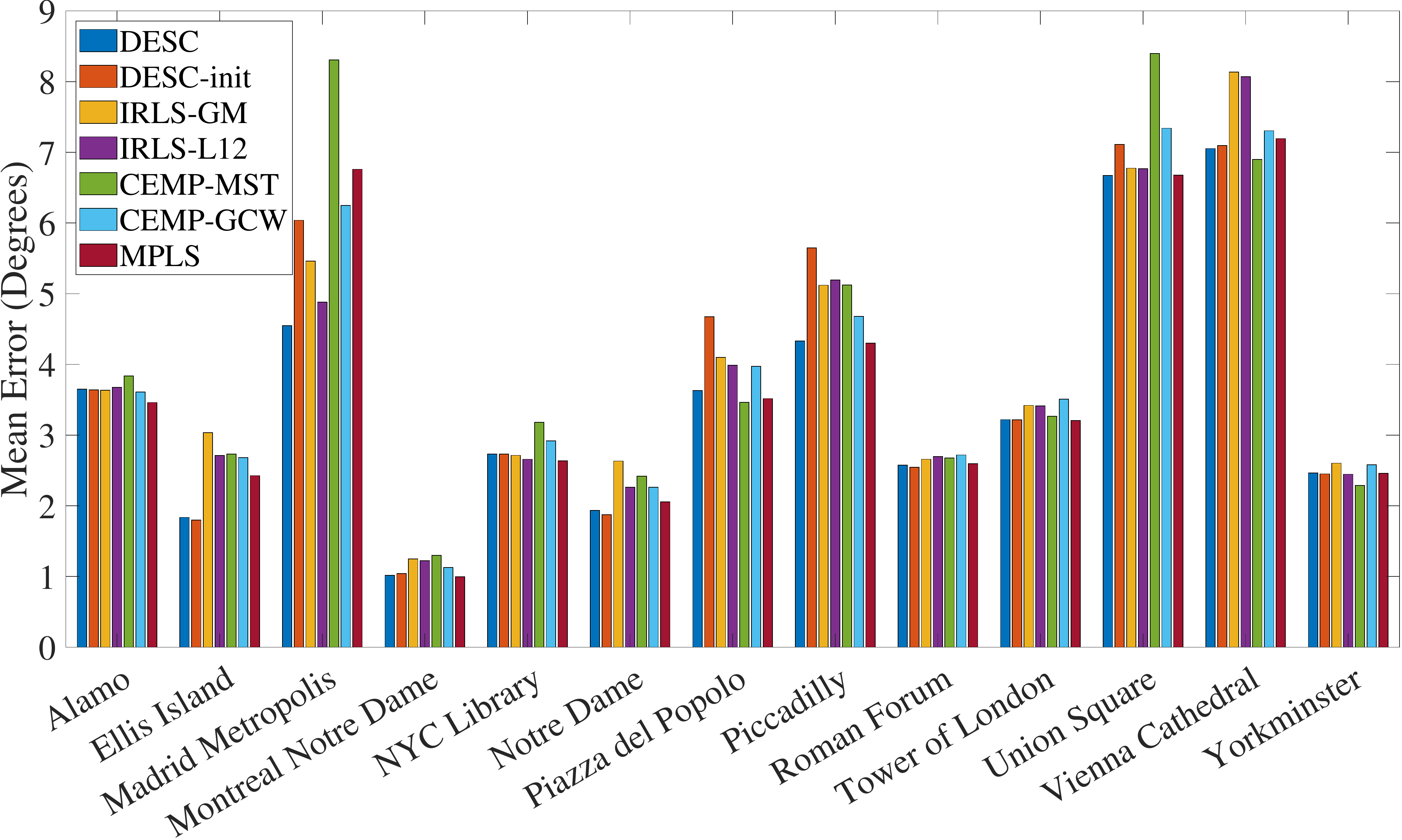}}
\caption{Mean error (in degrees) of rotation estimation for each algorithm on 13 of the Photo Tourism datasets.}
\label{real_mean_error}
\end{center}
\vskip -0.2in
\end{figure}

Figures \ref{real_mean_error} and \ref{real_median_error} report the mean and median rotation errors, respectively, in degrees. 
The Gendarmenmarkt dataset is not included because all methods performed very poorly on it, with over 30 degrees error, which skewed the scale of the $y$ axis.
We note that DESC is overall competitive. The performance of all methods widely vary, though they are fairly consistent with each other for most datasets. 
Table \ref{tab:average_across_datasets} shows the average of the mean and median errors of all methods across all datasets. DESC performs the best by both metrics.

\begin{figure}[t]
\vskip 0.2in
\begin{center}
\centerline{\includegraphics[width=\columnwidth]{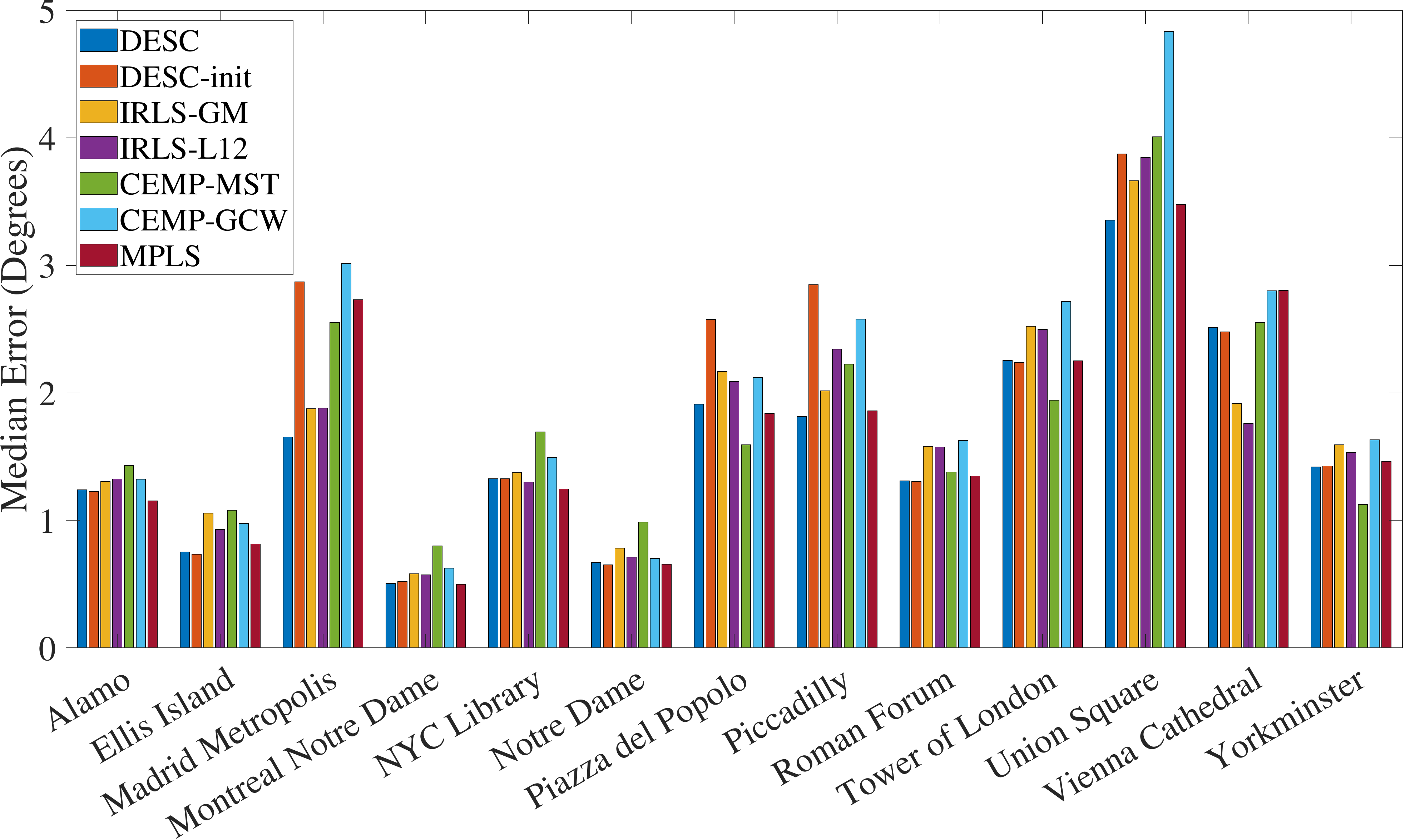}}
\caption{Median error (in degrees) of rotation estimation for each algorithm on 13 of the Photo Tourism datasets.}
\label{real_median_error}
\end{center}
\vskip -0.2in
\end{figure}

\begin{figure}[htbp]
\vskip 0.2in
\begin{center}
\centerline{\includegraphics[width= \columnwidth]{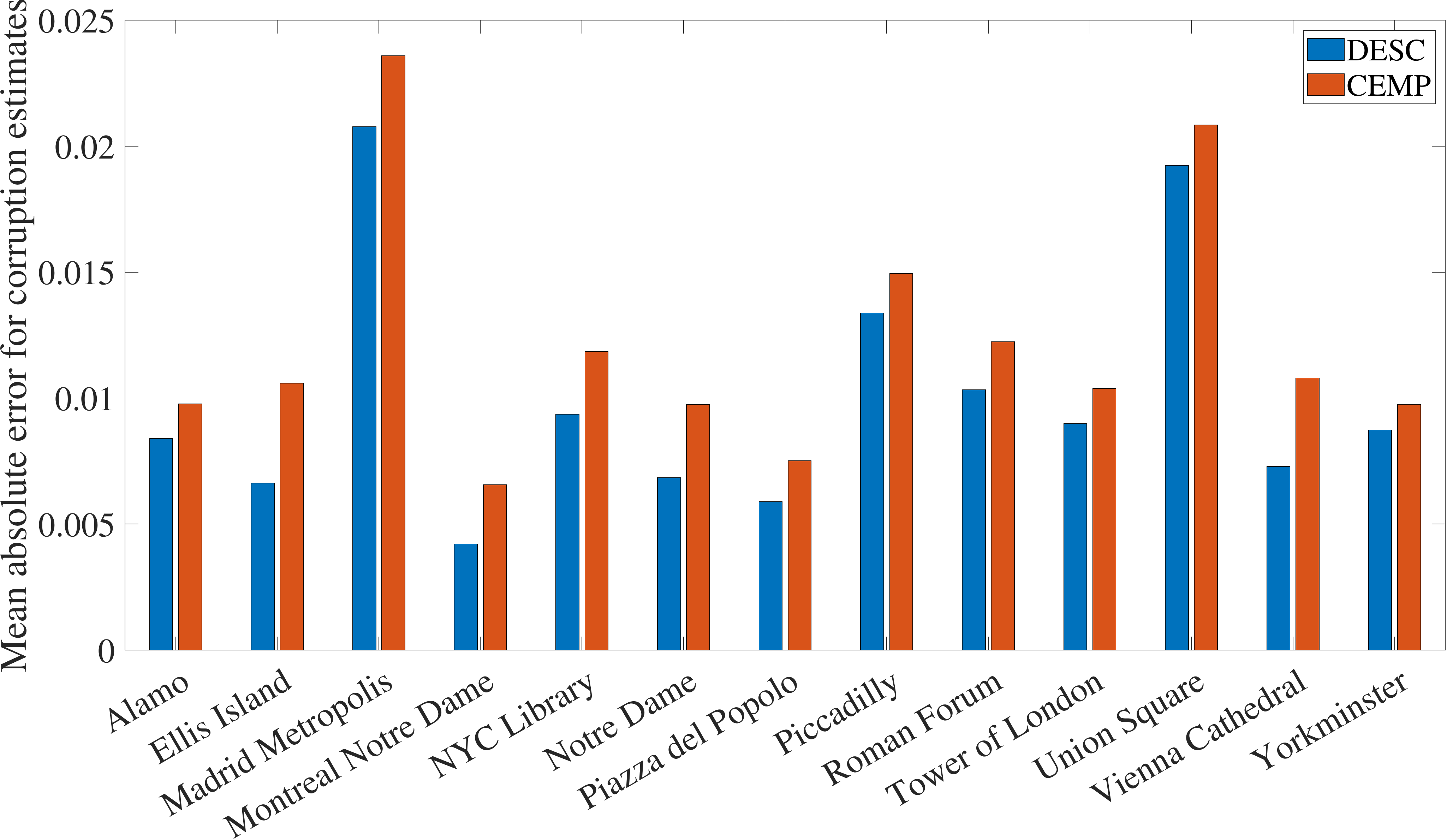}}
\caption{Mean absolute error for the corruption estimates of DESC and CEMP on 13 of the Photo Tourism datasets.}
\label{real_svec_error}
\end{center}
\vskip -0.2in
\end{figure}

\begin{figure}[htbp]
\vskip 0.2in
\begin{center}
\centerline{\includegraphics[width=\columnwidth]{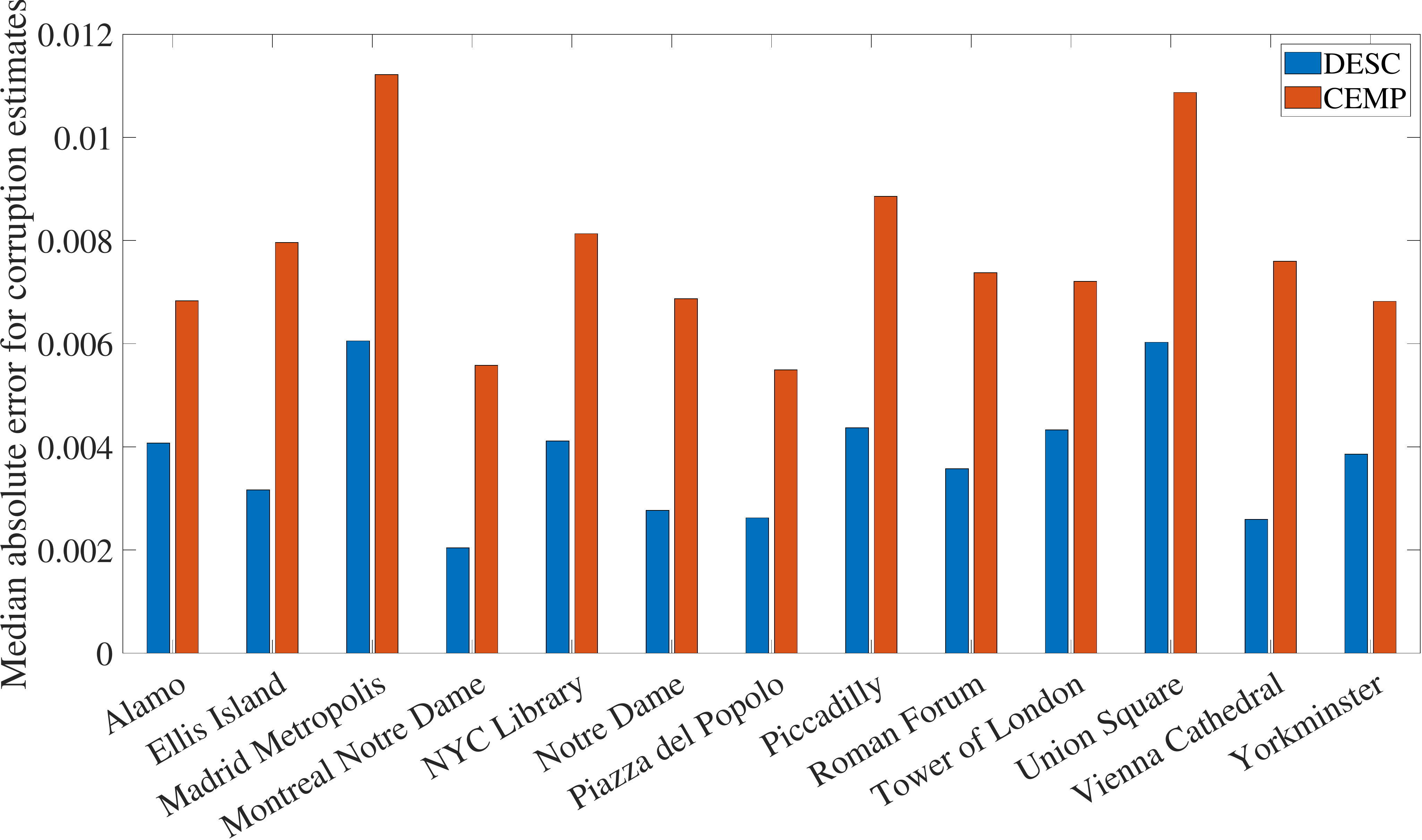}}
\caption{Median absolute error for the corruption estimates of DESC and CEMP on 13 of the Photo Tourism datasets.}
\label{real_svec_error_medians}
\end{center}
\vskip -0.2in
\end{figure}

Next, we tested the ability of DESC 
to estimate edge corruptions. Figures \ref{real_svec_error} and \ref{real_svec_error_medians} report the mean and median error of corruption estimation, respectively, of both DESC and CEMP. Clearly, DESC is more successful than CEMP in recovering the corruption levels. In particular, the median error of DESC is more than $50\%$ lower than that of CEMP on six datasets.

\section{Conclusion}
\label{sec:conclude}
We proposed DESC, a novel framework for estimating the corruption levels of group ratios in group synchronization. It has a clear interpretation and we proved its 
exact recovery under a mild generic condition. We also established a tight recovery bound in terms of the corruption parameter under UCM. 
We proposed a simple numerical strategy that aimed to solve the optimization problem of DESC. We explained how to use it to solve the underlying group elements. We further refined this solution for the special case of rotation averaging.
Our experiments on synthetic and real data of rotation averaging indicated that our proposed method often outperforms CEMP in corruption estimation and is competitive with state-of-the-art algorithms for rotation averaging. 

Nevertheless, our method also has some limitations. First, our gradient descent algorithm is typically slower than CEMP. Second, when initializing the group elements, our edge weights are updated by $\widehat s_{ij}^{-3/2}$, which is quite heuristic. Consequently, an improvement in corruption estimation may not always result in improvement in rotation estimation. 

In the future, we would like to study faster algorithms for optimizing our DESC formulation, and optimal ways of assigning edge weights under certain probabilistic models. We also plan to extend the idea behind our DESC framework to other tasks with structural consistency, such as subspace recovery and rank aggregation.
We can also generalize our method to incorporate longer cycles in order to handle sparse graphs. For better numerical efficiency, we can use the ideas of \citet{guibas2019condition} for sampling a smaller set of clean cycles. 

\subsection*{Acknowledgement}
This work was supported by NSF awards 1821266, 2124913.


\FloatBarrier


\bibliography{bib_desc_icml}
\bibliographystyle{icml2022}



\end{document}